\algrenewcommand{\algorithmicindent}{1.15em}
\newtheorem{corollary}{Corollary}
\newtheorem{definition}{Definition}
\newtheorem{example}{Example}
\newtheorem{theorem}{Theorem}
\acrodef{cp}[CP]{colour passing}
\acrodef{cpr}[ACP]{advanced colour passing}
\acrodef{crv}[CRV]{counting randvar}
\acrodef{fg}[FG]{factor graph}
\acrodef{ljt}[LJT]{lifted junction tree}
\acrodef{lv}[logvar]{logical variable}
\acrodef{lve}[LVE]{lifted variable elimination}
\acrodef{pcrv}[PCRV]{parameterised CRV}
\acrodef{pf}[parfactor]{parametric factor}
\acrodef{pfg}[PFG]{parametric factor graph}
\acrodef{prv}[PRV]{parameterised randvar}
\acrodef{rv}[randvar]{random variable}
\acrodef{ve}[VE]{variable elimination}
\acrodef{wl}[WL]{Weisfeiler-Leman}
\newcommand{\alginput}[1]{\hspace*{\algorithmicindent} \textbf{Input:} #1}
\newcommand{\algoutput}[1]{\hspace*{\algorithmicindent} \textbf{Output:} #1}
\newcommand{\abs}[1]{\lvert #1 \rvert}
\newcommand{\symdiff}{\ensuremath{\mathbin{\triangle}}}
\definecolor{myyellow}{RGB}{247,192,26}
\definecolor{myblue}{RGB}{37,122,164}
\definecolor{mygreen}{RGB}{78,155,133}
\definecolor{mypurple}{RGB}{86,51,94}
\tikzset{
	rv/.style={draw, ellipse},
	pf/.style={draw, rectangle, fill = gray!30},
	arc/.style = {->, semithick, >={[round,sep]Stealth}},
}
\newcommand\factor[6]{
	\node[pf, #1=#3 of #2, label=#4:{#5}](#6) {};
}
\newcommand\nodecolorshift[5]{
	\node[circle, fill=#1, above right=0.1cm of #2, inner sep=0pt, minimum size=2mm, xshift=#4, yshift=#5](#3) {};
}
\newcommand\factorcolor[3]{
	\node[circle, fill=#1, above right=0cm and 0.05cm of #2, inner sep=0pt, minimum size=2mm](#3) {};
}
\newcommand\factorcolorshift[4]{
	\node[circle, fill=#1, above right=0cm and 0.05cm of #2, inner sep=0pt, minimum size=2mm, xshift=#4](#3) {};
}
\newcommand\pfs[8]{
	\node[pf, #1=#3 of #2, xshift=-1mm, yshift=1mm](#6) {};
	\node[pf, #1=#3 of #2, label={[label distance=1mm]#4:{#5}}](#7) {};
	\node[pf, #1=#3 of #2, xshift=1mm, yshift=-1mm](#8) {};
}
\title{Colour Passing Revisited: Lifted Model Construction with Commutative Factors\thanks{Extended
version of paper accepted to the Proceedings of the 38th AAAI
Conference on Artificial Intelligence (AAAI-24).}}
\author{%
	Malte Luttermann\textsuperscript{\rm 1,\rm 2},
	Tanya Braun\textsuperscript{\rm 3},
	Ralf Möller\textsuperscript{\rm 1,\rm 2},
	Marcel Gehrke\textsuperscript{\rm 2}
}
\begin{document}

\maketitle

\begin{abstract}
	Lifted probabilistic inference exploits symmetries in a probabilistic model to allow for tractable probabilistic inference with respect to domain sizes.
	To apply lifted inference, a lifted representation has to be obtained, and to do so, the so-called \acl{cp} algorithm is the state of the art.
	The \acl{cp} algorithm, however, is bound to a specific inference algorithm and we found that it ignores commutativity of factors while constructing a lifted representation.
	We contribute a modified version of the \acl{cp} algorithm that uses \aclp{lv} to construct a lifted representation independent of a specific inference algorithm while at the same time exploiting commutativity of factors during an offline-step.
	Our proposed algorithm efficiently detects more symmetries than the state of the art and thereby drastically increases compression, yielding significantly faster online query times for probabilistic inference when the resulting model is applied.
\end{abstract}
\acresetall

\section{Introduction} \label{sec:intro}
\Acp{pfg} combine probabilistic modeling with first-order logic by introducing \acp{lv}, allowing for reasoning under uncertainty about individuals and their relationships.
A fundamental task using \acp{pfg} is to perform probabilistic inference, i.e., to compute marginal distributions of \acp{rv} given observations for other \acp{rv}.
To allow for tractable probabilistic inference (e.g., inference requiring polynomial time) with respect to domain sizes of \acp{lv}, \acp{pfg} with corresponding inference algorithms have been developed, where the main idea is to use a representative for indistinguishable individuals for computations.
To perform lifted inference, a lifted representation has to be constructed first.
In this paper, we study the problem of lifted model construction---that is, we aim to obtain a lifted representation, equivalent to a given propositional (ground) model, which can then be used for lifted probabilistic inference.
In particular, we consider the problem of lifted model construction independent of a specific inference algorithm and under consideration of commutative factors, i.e., factors that map to identical values regardless of the order of their arguments.

Over the last two decades, there has been a considerable amount of research dealing with lifted probabilistic inference using \acp{pfg}.
Lifted inference exploits symmetries in a relational model to compute marginal distributions more efficiently while maintaining exact answers~\citep{Niepert2014a}.
\citet{Poole2003a} first introduces \acp{pfg} and \ac{lve} as an algorithm to perform lifted inference in \acp{pfg}.
\Ac{lve} has then been refined and developed further by many researchers to reach its current form~\citep{DeSalvoBraz2005a,DeSalvoBraz2006a,Milch2008a,Kisynski2009a,Taghipour2013a,Braun2018a}.
Other inference algorithms operating on \acp{pfg} include the \ac{ljt} algorithm, which is designed to handle sets of queries~\citep{Braun2016a}.
The well-known \ac{cp} algorithm (originally named \enquote{CompressFactorGraph}) introduced by \citet{Kersting2009a} builds on the work by \citet{Singla2008a} and is commonly used to construct a lifted representation from a given \ac{fg}.
The \ac{cp} algorithm incorporates a colour passing procedure to detect symmetries in a graph similar to the \acl{wl} algorithm~\citep{Weisfeiler1968a}, which is commonly used to test for graph isomorphism.
Even though the \ac{cp} algorithm is technically able to construct a \ac{pfg}, \ac{cp} in its current form is used as an intermediate step for lifted belief propagation and hence, \ac{cp} is bound to this specific algorithm and does not introduce \acp{lv} to output a valid \ac{pfg}.
Furthermore, the \ac{cp} algorithm does not handle commutative factors (i.e., factors that map to a unique output value regardless of the order of some input values) and is dependent on the order of the factors' argument lists (i.e., identical factors where the arguments of one factor are permuted are not recognised).
Both impose significant limitations for practical applications as taking commutative factors into account and finding identical factors independent of the order of their argument lists result in more compressed models and thus allow for an additional speedup during inference.
Other works establish a connection between automorphism groups and coloured graphs~\citep{Niepert2012a,Bui2013a,Holtzen2020a} by searching for symmetries in a full joint probability distribution without exploiting a factorisation of the distribution.
However, these works do not introduce \acp{lv} and hence, their lifted representation is dependent on a specific inference algorithm as well.

To overcome the limitations of neglecting commutative factors and relying on fixed argument orders to detect identical factors in \ac{cp}, we contribute the \ac{cpr} algorithm which is a modification of \ac{cp} that also handles commutative factors and finds identical factors independent of argument orders, resulting in more compact models while maintaining equivalent model semantics.
In addition, \ac{cpr} is independent of a specific inference algorithm.
\Ac{cpr} uses so-called \acp{crv} to compactly encode commutative factors and we transfer the idea of using histograms from \acp{crv} to allow for order-independent identification of identical factors.
More specifically, we (i) exploit symmetries \emph{within} a factor, and (ii) make use of symmetries \emph{between} factors, where potentials are identical although they have not been recognised as such before.
An additional off\-line step allows us to tackle both (i) and (ii), which contribute to a more compact model to drastically reduce the time needed to perform online inference.
\Acp{crv} are already used in lifted inference algorithms but are not yet used during learning a \ac{pfg} and thus, using \acp{crv} to encode a \ac{pfg} vastly compresses the model.
We also show how \acp{lv} are introduced to obtain a fully-fledged pipeline from propositional \ac{fg} to \ac{pfg} allowing for tractable probabilistic inference with respect to domain sizes in a \ac{pfg} independent of a specific inference algorithm.

The remaining part of this paper is structured as follows.
\Cref{sec:prelim} introduces background information and notations.
Afterwards, in \cref{sec:commutative}, we present solutions to efficiently handle both symmetries within factors and symmetries between factors.
Following these solutions, in \cref{sec:cpr}, we introduce the \ac{cpr} algorithm, which builds on the \ac{cp} algorithm, to transform an input \ac{fg} to a valid \ac{pfg} under consideration of commutative factors and independent of factors' argument orders.
We show how \acp{lv} are introduced by \ac{cpr} to obtain the \ac{pfg} as a lifted representation independent of a specific inference algorithm.
In \cref{sec:eval}, we provide experiments confirming that \ac{cpr} yields significantly faster inference times compared to the state of the art.

\section{Background} \label{sec:prelim}
We begin by recapitulating \acp{fg} as propositional probabilistic models and then move on to define \acp{pfg}.
An \ac{fg} is an undirected graphical model to represent a full joint probability distribution~\citep{Kschischang2001a}.
\begin{definition}[Factor Graph]
	An \emph{\ac{fg}} $G = (\boldsymbol V, \boldsymbol E)$ is a bipartite graph with $\boldsymbol V = \boldsymbol R \cup \boldsymbol F$ where $\boldsymbol R = \{R_1, \ldots, R_n\}$ is a set of variable nodes and $\boldsymbol F = \{f_1, \ldots, f_m\}$ is a set of factor nodes, and there is an edge between a variable node $R$ and a factor node $f$ in $\boldsymbol E \subseteq \boldsymbol R \times \boldsymbol F$ if $R$ appears in the argument list of $f$.
	A factor is a function that maps its arguments to a positive real number (called potential).
	The semantics of an \ac{fg} can be expressed by $P(R_1, \ldots, R_n) = \frac{1}{Z} \prod_{f \in \boldsymbol F} f$ with $Z$ being the normalisation constant.
\end{definition}
\Cref{fig:example_fg} shows a toy example of an \ac{fg} with five variable nodes $ComA$, $ComB$, $Rev$, $SalA$, and $SalB$ and five factor nodes $f_1, \ldots, f_5$.
The \ac{fg} describes the relationships between a company's revenue ($Rev$) and its employee's competences and salaries: There are two employees $Alice$ ($A$) and $Bob$ ($B$), their competences are denoted as $ComA$, $ComB$, respectively, and their salaries are given by $SalA$, $SalB$, respectively.
The input-output pairs of the factors are omitted for brevity.
We next define \acp{pfg}, first introduced by \citet{Poole2003a}, based on the definitions given by \citet{Gehrke2020a}.
\Acp{pfg} combine first-order logic with probabilistic models, using \acp{lv} as parameters in \acp{rv} to represent sets of indistinguishable \acp{rv}, forming \acp{prv}.
\begin{definition}[Logvar, PRV, Event]
	Let $\boldsymbol{R}$ be a set of \ac{rv} names, $\boldsymbol{L}$ a set of \ac{lv} names, $\Phi$ a set of factor names, and $\boldsymbol{D}$ a set of constants.
	All sets are finite.
	Each \ac{lv} $L$ has a domain $\mathcal{D}(L) \subseteq \boldsymbol{D}$.
	A \emph{constraint} is a tuple $(\mathcal{X}, C_{\mathcal{X}})$ of a sequence of \acp{lv} $\mathcal{X} = (X_1, \ldots, X_n)$ and a set $C_{\mathcal{X}} \subseteq \times_{i = 1}^n\mathcal{D}(X_i)$.
	The symbol $\top$ for $C$ marks that no restrictions apply, i.e., $C_{\mathcal{X}} = \times_{i = 1}^n\mathcal{D}(X_i)$.
	A \emph{\ac{prv}} $R(L_1, \ldots, L_n)$, $n \geq 0$, is a syntactical construct of a \ac{rv} $R \in \boldsymbol{R}$ possibly combined with \acp{lv} $L_1, \ldots, L_n \in \boldsymbol{L}$ to represent a set of \acp{rv}.
	If $n = 0$, the \ac{prv} is parameterless and forms a propositional \ac{rv}.
	A \ac{prv} $A$ (or \ac{lv} $L$) under constraint $C$ is given by $A_{|C}$ ($L_{|C}$), respectively.
	We may omit $|\top$ in $A_{|\top}$ or $L_{|\top}$.
	The term $\mathcal{R}(A)$ denotes the possible values (range) of a \ac{prv} $A$. 
	An \emph{event} $A = a$ denotes the occurrence of \ac{prv} $A$ with range value $a \in \mathcal{R}(A)$ and we call a set of events $\boldsymbol E = \{A_1 = a_1, \ldots, A_k = a_k\}$ \emph{evidence}.
\end{definition}
As an example, consider $\boldsymbol{R} = \{Com, Rev, Sal\}$ for competence, revenue, and salary, respectively, and $\boldsymbol{L} = \{E\}$ with $\mathcal{D}(E) = \{Alice, Bob\}$ (employees), combined into Boolean \acp{prv} $Com(E)$, $Rev$, and $Sal(E)$.

A \ac{pf} describes a function, mapping argument values to positive real numbers (called potentials), of which at least one is non-zero.
\begin{figure}[t]
	\centering
	\begin{subfigure}{0.45\linewidth}
		\centering
		\begin{tikzpicture}
			\node[rv, draw] (ca) {$ComA$};
			\node[rv, draw, right = 0.2cm of ca] (cb) {$ComB$};
		
			\factor{above}{ca}{0.3cm}{180}{$f_1$}{f1}
			\factor{above}{cb}{0.3cm}{0}{$f_2$}{f2}
		
			\factor{below left}{cb}{0.4cm and 0.25cm}{90}{$f_3$}{f3}
		
			\node[rv, draw, below = 0.3cm of f3] (rev) {$Rev$};
			\factor{below left}{rev}{0.4cm and 0.45cm}{180}{$f_4$}{f4}
			\factor{below right}{rev}{0.4cm and 0.45cm}{0}{$f_5$}{f5}
		
			\node[rv, draw, below = 0.3cm of f4] (sa) {$SalA$};
			\node[rv, draw, below = 0.3cm of f5] (sb) {$SalB$};
		
			\draw (f1) -- (ca);
			\draw (f2) -- (cb);
		
			\draw (ca) -- (f3);
			\draw (cb) -- (f3);
		
			\draw (f3) -- (rev);
		
			\draw (rev) -- (f4);
			\draw (rev) -- (f5);
			\draw (ca) -- (f4);
			\draw (cb) -- (f5);
		
			\draw (f4) -- (sa);
			\draw (f5) -- (sb);
		\end{tikzpicture}
		\caption{}
		\label{fig:example_fg}
	\end{subfigure}
	\begin{subfigure}{0.5\linewidth}
		\centering
		\begin{tikzpicture}
			\node[rv, inner sep = 1.8pt] (C) {$Com(E)$};
		
			\pfs{above}{C}{0.5cm}{0}{$g_1$}{G1a}{G1}{G1b}
			\factor{below left}{C}{0.6cm and 0.45cm}{180}{$g_2$}{G2}
			\pfs{below right}{C}{0.6cm and 0.45cm}{0}{$g_3$}{G3a}{G3}{G3b}
		
			\node[rv, below = 0.7cm of G2] (R) {$Rev$};
			\node[rv, inner sep = 1.8pt, below = 0.7cm of G3] (S) {$Sal(E)$};
		
			\begin{pgfonlayer}{bg}
				\draw (C) -- (G1);
				\draw (C) -- (G2);
				\draw (C) -- (G3);
				\draw (R) -- (G2);
				\draw (R) -- (G3);
				\draw (S) -- (G3);
			\end{pgfonlayer}
		\end{tikzpicture}
		\caption{}
		\label{fig:example_pm}
		\end{subfigure}
	\caption{(a) An \ac{fg} describing the relationships between a company's revenue and its employee's competences and salaries, (b) a \ac{pfg} corresponding to the lifted representation of the \ac{fg} shown in (a). The mappings of argument values to potentials of the (par)factors are omitted for brevity.}
	\label{fig:example_fg_pm}
\end{figure}
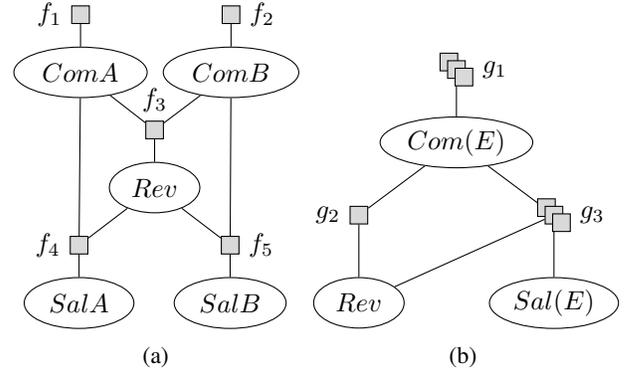
\begin{definition}[Parfactor, Model, Semantics]
	We denote a \emph{\ac{pf}} $g$ by $\phi(\mathcal{A})_{| C}$ with $\mathcal{A} = (A_1, \ldots, A_n)$ a sequence of \acp{prv}, $\phi$$:$ $\times_{i = 1}^n \mathcal{R}(A_i) \mapsto \mathbb{R}^+$ a function with name $\phi \in \Phi$ mapping argument values to a positive real number called \emph{potential}, and $C$ a constraint on the \acp{lv} of $\mathcal{A}$. 
	We may omit $|\top$ in $\phi(\mathcal{A})_{|\top}$.
	The term $lv(Y)$ refers to the \acp{lv} in some element $Y$, a \ac{prv}, a \ac{pf}, or sets thereof.
	The term $gr(Y_{| C})$ denotes the set of all instances (groundings) of $Y$ w.r.t.\ constraint $C$.
	A set of \acp{pf} $\{g_i\}_{i=1}^n$ forms a \emph{\ac{pfg}} $G$.
	The semantics of $G$ is given by grounding and building a full joint distribution.
	With $Z$ as the normalisation constant, $G$ represents $P_G = \frac{1}{Z} \prod_{f \in gr(G)} f$.
\end{definition}
Before we take a look at an example, we need the concept of a \ac{crv}, introduced by \citet{Milch2008a}.
\begin{definition}[\Ac{crv}]
	$\#_{X}[R(\mathcal{X})]$ denotes a \emph{\ac{crv}}, where $lv(\mathcal{X}) = \{X\}$ (other inputs constant).
	Its range is the space of possible histograms.
	A histogram $h$ is a set of tuples $\{(v_i, n_i)\}_{i = 1}^m$, $v_i \in \mathcal R(R(\mathcal{X}))$, $n_i \in \mathbb{N}$, $m = |\mathcal R(R(\mathcal{X}))|$, and $\sum_i n_i = |gr(X_{| C})|$ for some constraint $C$ over $\mathcal{X}$.
	A shorthand notation is $[n_1, \ldots, n_m]$.
	$h(v_i)$ returns $n_i$.
	Since counting binds \ac{lv} $X$, $lv(\#_{X} [R(\mathcal{X})]) = \mathcal{X} \setminus \{X\}$.
\end{definition}
For example, the two groundings of a Boolean \ac{prv} $R(X)$ with $|\mathcal D(X)| = 2$ can be assigned $\mathrm{true}$ values ($[2,0]$), one $\mathrm{true}$ and one $\mathrm{false}$ value ($[1,1]$), or $\mathrm{false}$ values ($[0,2]$).
\Cref{fig:example_pm} shows a \ac{pfg} $G = \{g_i\}^3_{i=1}$ with $g_1 = \phi_1(Com(E))_{|\top}$, $g_2 = \phi_2(\#_E[Com(E)], Rev)_{|\top}$, and $g_3 = \phi_3(Com(E), Rev, Sal(E))_{|\top}$ where $\phi_2$ contains a \ac{crv} $\#_E[Com(E)]$.
$G$ is a lifted representation of the \ac{fg} shown in \cref{fig:example_fg}.
The definition of a \ac{pfg} also implies that every \ac{fg} is a \ac{pfg} containing only parameterless \acp{rv}.

The state of the art algorithm to transform a (propositional) \ac{fg} into a lifted representation is the \ac{cp} algorithm~\citep{Kersting2009a,Ahmadi2013a}, which we briefly recap in the following.
\ac{cp} tries to find symmetries in an \ac{fg} based on potentials of factors, on ranges and evidence of \acp{rv}, as well as on the graph structure.
Each \ac{rv} is assigned a colour such that \acp{rv} with identical ranges and identical evidence get the same colour, and each factor is assigned a colour such that factors with the same potentials get the same colour.
The colours are then passed from every \ac{rv} to its neighbouring factors and vice versa.
After each colour passing step, colours are reassigned depending on the received colours and a node's own colour.
Factor nodes also include the position of a \ac{rv} in their argument list into their message.
In the end, all \acp{rv} and factors, respectively, are grouped together based on their colours and the procedure is iterated until groupings do not change anymore.
A formal description of \ac{cp} can be found in \cref{appendix:cp_algo}.
In its current form, \ac{cp} does not handle commutative factors and is dependent on the order of the factors' argument lists.
Therefore, we next explore the problem of lifting an \ac{fg} to obtain a \ac{pfg} taking into account commutative factors and afterwards tackle the problem of finding identical factors in an \ac{fg} independent of the order of their argument lists.

\section{Colour Passing Revisited} \label{sec:commutative}
Consider again the example shown in \cref{fig:example_fg_pm}.
Intuitively, one might expect the \ac{cp} algorithm to output the groupings corresponding to the \ac{pfg} shown in \cref{fig:example_pm} if $f_1$ and $f_2$ as well as $f_4$ and $f_5$ share the same potentials.
The \ac{cp} algorithm, however, ends up without grouping anything if it is provided with the \ac{fg} from \cref{fig:example_fg} as input because $f_3$ sends different messages to $ComA$ and $ComB$ due to different positions of $ComA$ and $ComB$ in $f_3$'s argument list.

In the upcoming subsection, we show that in cases where a factor $f$ is commutative, the position of a \ac{rv} in $f$'s argument list is not relevant for the colour passing procedure and hence can be omitted.
Afterwards, we transfer the idea of using histograms from \acp{crv} to efficiently detect identical factors independent of the order of their arguments.
We demonstrate that instead of scanning two tables from top to bottom and comparing their values one by one, we can build a set of potential values for each possible histogram and compare the sets pairwise to guarantee order-independence without introducing additional computational overhead.

\subsection{Symmetries within Factors} \label{sec:intra_factor_symmetries}
A simplified version of the situation regarding $f_3$ in \cref{fig:example_fg} is depicted in \cref{fig:crv_example_fg}. 
Here, again, \ac{cp} does not group anything because $A$ and $B$ have different positions in $\phi_1$.
However, in this example, $\phi_1$ encodes a symmetric function (that is, a function returning the same value independent of the order of its arguments) because $\phi_1(\mathrm{true}, \mathrm{false}) = \phi_1(\mathrm{false}, \mathrm{true}) = \varphi_2$ and we could use the symmetries within $\phi_1$ to group $A$ and $B$ using a \ac{crv}, as shown in \cref{fig:crv_example_pm}.
Although lifted inference algorithms such as \ac{lve} use \acp{crv} by count converting \acp{prv} during the inference task~\citep{Taghipour2013a}, \acp{crv} have, to the best of our knowledge, not been used to learn a valid \ac{pfg} so far.
In consequence, we gain a significant speedup for inference algorithms when using \acp{crv} to model a \ac{pfg}.

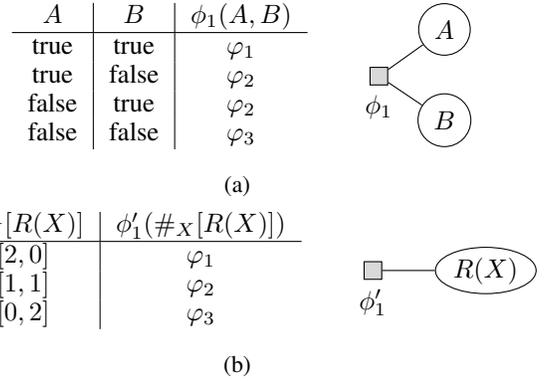
\begin{figure}[t]
	\centering
	\begin{subfigure}[t]{\linewidth}
		\centering
		\begin{tikzpicture}
			\node[circle, draw] (A) {$A$};
			\node[circle, draw] (B) [below = 0.5cm of A] {$B$};
			\factor{below left}{A}{0.25cm and 0.5cm}{270}{$\phi_1$}{f1}
		
			\node[left = 0.7cm of f1] (tab_f1) {
				\begin{tabular}{c|c|c}
					$A$   & $B$   & $\phi_1(A,B)$ \\ \hline
					true  & true  & $\varphi_1$ \\
					true  & false & $\varphi_2$ \\
					false & true  & $\varphi_2$ \\
					false & false & $\varphi_3$ \\
				\end{tabular}
			};
		
			\draw (A) -- (f1);
			\draw (B) -- (f1);
		\end{tikzpicture}
		\caption{}
		\label{fig:crv_example_fg}
	\end{subfigure}

	\begin{subfigure}[t]{\linewidth}
		\centering
		\begin{tikzpicture}
			\node[ellipse, inner sep = 1.2pt, draw] (AB) {$R(X)$};
			\factor{left}{AB}{0.7cm}{270}{$\phi'_1$}{f1}
		
			\node[left = 0.7cm of f1] (tab_f1) {
				\begin{tabular}{c|c}
					$\#_X[R(X)]$  & $\phi'_1(\#_X[R(X)])$ \\ \hline
					$[2,0]$ & $\varphi_1$ \\
					$[1,1]$ & $\varphi_2$ \\
					$[0,2]$ & $\varphi_3$ \\
				\end{tabular}
			};
		
			\begin{pgfonlayer}{bg}
				\draw (AB) -- (f1);
			\end{pgfonlayer}
		\end{tikzpicture}
		\caption{}
		\label{fig:crv_example_pm}
	\end{subfigure}
	\caption{(a) An \ac{fg} containing a commutative factor $\phi_1$, (b) a \ac{pfg} entailing equivalent semantics as the \ac{fg} shown in (a). To obtain the \ac{pfg} in (b), $\phi_1$ is mapped to $\phi'_1$ using a \ac{crv} $\#_X[R(X)]$ counting over a \ac{lv} $X$ with $|\mathcal D(X)| = 2$.}
	\label{fig:crv_example}
\end{figure}

We now explain how to make use of the symmetries within $\phi_1$ in \cref{fig:crv_example_fg} to group together $A$ and $B$.
Even though we choose Boolean \acp{rv} to keep the example small, the idea also applies to \acp{rv} with arbitrary ranges.
For two Boolean \acp{rv} (here $A$ and $B$), there are three possible histograms $[2,0]$, $[1,1]$, and $[0,2]$.
As $\phi_1$ outputs a unique value for all of the three possible histograms, $\phi_1$ is commutative and thus, $\phi_1$ can be represented by a factor $\phi'_1$ taking as input a \ac{crv} that counts over a \ac{lv} $X$ with $\abs{\mathcal D(X)} = 2$.
\Cref{fig:crv_example_pm} visualises the resulting \ac{pfg} using a \ac{crv} to obtain a lifted representation equivalent to the \ac{fg} shown in \cref{fig:crv_example_fg}, where the size of the table has been reduced from exponential to polynomial in the number of arguments of the factor.
$\phi'_1$ now maps each possible histogram to a potential, i.e., $\phi'_1$ outputs $\varphi_1$ for the histogram $[2,0]$, $\varphi_2$ for $[1,1]$, and $\varphi_3$ for $[0,2]$.
These mappings of the histograms capture exactly the semantics of $\phi_1$ from \cref{fig:crv_example_fg}.
Consequently, we use \acp{crv} to compactly encode symmetric functions and thus allow for additional groupings of \acp{rv} and factors.
We incorporate this idea into the \ac{cpr} algorithm by omitting the position of a \ac{rv} in a factor's argument list in a message if the factor is commutative.
To profit from the usage of \acp{crv}, it is even sufficient for a factor to be \emph{partially commutative}, defined as follows. 
\begin{definition}[Partially Commutative Factor] \label{def:partial_commutative_factor}
	A factor $\phi$ with argument list $R_1, \ldots, R_n$ is called \emph{partially commutative} if there exists a non-empty subset $\boldsymbol S \subseteq \{R_1, \ldots, R_n\}$ with $\abs{\boldsymbol S} > 1$ such that $\phi$ is commutative with respect to the subset $\boldsymbol S$, i.e., for all events $r_1, \ldots, r_n \in \times_{i=1}^n \mathcal R(R_i)$ it holds that $\phi(r_1, \ldots, r_n) = \phi(r_{\pi(1)}, \ldots, r_{\pi(n)})$ for all permutations $\pi$ of $\{1, \ldots, n\}$ with $\pi(i) = i$ for all $r_i \notin \boldsymbol S$.
\end{definition}
In many practical settings, factors are partially commutative, for example when individuals are indistinguishable and only the number of individuals having a certain property is of interest (e.g., in the employee example the number of competent employees determines the revenue of the company while it does not matter which specific employees are competent).
To check whether a factor $\phi(R_1, \ldots, R_n)$ is commutative, let w.l.o.g.\ $\{R_1, \ldots, R_m\} \subseteq \{R_1, \ldots, R_n\}$ be a maximal subset of \acp{rv} with $\mathcal R(R_1) = \ldots = \mathcal R(R_m)$ that satisfies the condition given in \cref{def:partial_commutative_factor}.
If no such subset exists, $\phi$ is not commutative, else $\phi$ can be mapped to a new (par)factor $\phi'$ by replacing the \acp{rv} $R_1, \ldots, R_m$ by a \ac{crv} counting over a \ac{lv} $X$ with $|\mathcal D(X)| = m$.
The remaining \acp{rv} occurring in the argument list of $\phi$ but not in $R_1, \ldots, R_m$ are transferred unchanged to the argument list of $\phi'$.
Each combination of histogram and possible values for the remaining \acp{rv} is then mapped to a unique value by $\phi'$.
\begin{example}
	Consider again the factor $f_3$ from \cref{fig:example_fg}, which takes the three arguments $ComA$, $ComB$, and $Rev$ as input.
	Since the structure of $Rev$ differs from the structure of $ComA$ and $ComB$, we do not intend to count over the complete set of input \acp{rv} but only over a subset of the set of input \acp{rv}.
	The subset should be as large as possible to obtain the most compression for our lifted representation.
	Moreover, to group $ComA$ and $ComB$ into $Com(E)$, they have to behave identically with respect to the potentials of $f_3$, that is, counting over them has to result in identical potentials for each histogram.
	In particular, assuming that the order of $f_3$'s arguments is $ComA$, $ComB$, $Rev$, it must hold that $f_3(\mathrm{true}, \mathrm{false}, \mathrm{true}) = f_3(\mathrm{false}, \mathrm{true}, \mathrm{true})$ and $f_3(\mathrm{true}, \mathrm{false}, \mathrm{false}) = f_3(\mathrm{false}, \mathrm{true}, \mathrm{false})$.
\end{example}
For a factor $\phi$ with $n$ arguments $R_1, \ldots, R_n$, the number of possible subsets we could possibly count over is in $O(2^n)$.
The number of candidates can be reduced by only considering subsets $\{R_1, \ldots, R_m\} \subseteq \{R_1, \ldots, R_n\}$ where $R_1, \ldots, R_m$ have the same number of neighbours in the graph because \acp{rv} with different numbers of neighbouring factors receive different messages during colour passing and hence cannot be grouped together.
Even though the number of subsets to check in the worst case remains in $O(2^n)$, the computational effort is often reduced by the optimisation of only considering subsets consisting of \acp{rv} which have the same number of neighbours.
Moreover, as we aim to compress an \ac{fg} by exploiting symmetries, we inherently assume that there are at least some symmetries in the \ac{fg} (otherwise, we would not intend to run \ac{cp}).
Thus, symmetries within factors are likely to be found fast in practical applications, which we also confirm in our experiments.

In the worst case, checking $O(2^n)$ subsets for commutativity is infeasible for large $n$ but we argue that for practical applications, we can assume that $n$ is reasonably small: A factor $\phi(R_1, \ldots, R_n)$ defines $2^n$ mappings (rows in its table) if all \acp{rv} $R_1, \ldots, R_n$ are Boolean and hence, storing the table of input-output pairs requires $O(2^n)$ space (for larger ranges of the $R_i$, there are even more mappings).
Consequently, the table cannot even be stored for large $n$, implying that the number of arguments of each factor is limited to small values in practical applications.
As there are at least as many rows as there are subsets of $\{R_1, \ldots, R_n\}$, the current version of \ac{cp} needs exponential time in $n$ even without checking for commutativity of factors.
Therefore, handling symmetries within factors requires no additional costs while at the same time allowing us to drastically increase compression and thus speed up online inference.
The number of rows is reduced from exponential in $n$ to polynomial in $n$~\citep{Milch2008a} and \acp{rv} as well as factors are grouped together that could not be grouped together before.

So far, we applied the idea of using histograms to find symmetries \emph{within} a factor.
Next, we aim to find symmetries \emph{between} factors independent of the order of their arguments before gathering both ideas into the \ac{cpr} algorithm.
When checking for symmetries \emph{between} multiple factors, the same histograms that are used to check for symmetries \emph{within} factors are computed, allowing us to reuse these histograms without additional computational effort.

\subsection{Symmetries between Factors}
We now deploy histograms to detect symmetries between different factors.
To illustrate this point, have a look at \cref{fig:input_order_fg}.
Considering the factor $\phi_2$ in \cref{fig:input_order_fg}, $C$ has position two and $B$ position one in the argument list of $\phi_2$.
Swapping the positions of $C$ and $B$ in $\phi_2$ results in a table where the positions of $\varphi_2$ and $\varphi_3$ are swapped because $\phi_2(C = \mathrm{true}, B = \mathrm{false}) = \varphi_2$ and $\phi_2(C = \mathrm{false}, B = \mathrm{true}) = \varphi_3$.
In summary, $\phi_2$ still entails the same semantics after rearranging its arguments if its mappings (rows in the table of $\phi_2$) are swapped accordingly at the same time.

\begin{figure}[t]
	\centering
	\begin{tikzpicture}
		\node[circle, draw] (A) {$A$};
		\node[circle, draw] (B) [below = 0.5cm of A] {$B$};
		\node[circle, draw] (C) [below = 0.5cm of B] {$C$};
		\factor{below right}{A}{0.25cm and 0.5cm}{270}{$\phi_1$}{f1}
		\factor{below right}{B}{0.25cm and 0.5cm}{270}{$\phi_2$}{f2}
	
		\node[right = 0.5cm of f1, yshift=4.5mm] (tab_f1) {
			\begin{tabular}{c|c|c}
				$A$   & $B$   & $\phi_1(A,B)$ \\ \hline
				true  & true  & $\varphi_1$ \\
				true  & false & $\varphi_2$ \\
				false & true  & $\varphi_3$ \\
				false & false & $\varphi_4$ \\
			\end{tabular}
		};
	
		\node[right = 0.5cm of f2, yshift=-4.5mm] (tab_f2) {
			\begin{tabular}{c|c|c}
				$B$   & $C$   & $\phi_2(B,C)$ \\ \hline
				true  & true  & $\varphi_1$ \\
				true  & false & $\varphi_3$ \\
				false & true  & $\varphi_2$ \\
				false & false & $\varphi_4$ \\
			\end{tabular}
		};
	
		\draw (A) -- (f1);
		\draw (B) -- (f1);
		\draw (B) -- (f2);
		\draw (C) -- (f2);
	\end{tikzpicture}
	\caption{An exemplary \ac{fg} where the input order of the arguments of $\phi_2$ (or $\phi_1$) can be rearranged such that the potentials of $\phi_1$ and $\phi_2$ are identical when comparing their tables.}
	\label{fig:input_order_fg}
\end{figure}
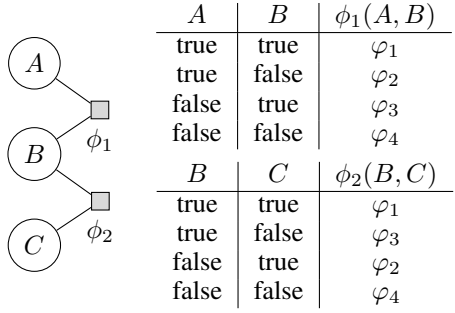

Thus, the potentials of $\phi_1$ and $\phi_2$ are actually identical.
Running \ac{cp} in its current form on the \ac{fg} from \cref{fig:input_order_fg}, however, results in no groups at all for two reasons.
First, depending on the way \ac{cp} checks for identical potentials, it might assign different colours to the factors $\phi_1$ and $\phi_2$ because comparing their tables row by row would lead to different colours for $\phi_1$ and $\phi_2$\footnote{The original \ac{cp} paper does not exactly specify how the check for identical potentials should be implemented.}.
Second, $A$ and $C$ are located at different positions in their respective factor and hence receive different messages during message passing.
However, requiring indistinguishable individuals to appear at the same position in each factor is a massive restriction for practical applications.
On the other hand, naively comparing all $O(n!)$ permutations of an argument list of length $n$ requires a lot of computational effort.
Using histograms to ensure necessary conditions when searching for identical potentials avoids having to try all permutations in advance.
\begin{theorem}[Necessary Conditions for Identical Potentials] \label{th:identical_potentials}
	For two factors $\phi_1(R_1, \ldots, R_n)$ and $\phi_2(R'_1, \ldots, R'_n)$ to be able to represent equivalent potentials, the following two conditions are required to hold:
	\begin{enumerate}
		\item\label{itm:bijection} A bijection $\tau$$:$ $\{R_1, \ldots, R_n\} \to \{R'_1, \ldots, R'_n\}$ exists that maps each $R_i$ to a $R'_j$ such that $\mathcal R(R_i) = \mathcal R(R'_j)$,
		\item\label{itm:identical_histograms} for each histogram $\{(v_i, n_i)\}_{i=1}^{m}$ with $v_i \in \bigcup_{i=1}^{n} \mathcal R(R_i)$, $n_i \in \mathbb N$, $m = |\bigcup_{i=1}^{n} \mathcal R(R_i)|$, and $\sum_i n_i = n$, the multiset of all potentials to which the histogram is mapped is identical for $\phi_1$ and $\phi_2$.
	\end{enumerate}
\end{theorem}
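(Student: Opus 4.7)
The plan is to interpret ``can represent equivalent potentials'' as the existence of a permutation $\pi$ of $\{1,\ldots,n\}$ such that $\phi_1(r_1,\ldots,r_n) = \phi_2(r_{\pi(1)},\ldots,r_{\pi(n)})$ for every tuple of argument values, and then derive the two conditions from this assumption.

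For Condition~\ref{itm:bijection}, I would simply set $\tau(R_i) := R'_{\pi(i)}$. This is a bijection on the argument sets by construction. The range equality $\mathcal{R}(R_i) = \mathcal{R}(R'_{\pi(i)})$ follows because both sides of the defining equation must be evaluable on the same value $r_i$: any $r_i \in \mathcal{R}(R_i)$ appears in position $\pi(i)$ of $\phi_2$, so $r_i \in \mathcal{R}(R'_{\pi(i)})$, and the symmetric argument using $\pi^{-1}$ gives the reverse inclusion.

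For Condition~\ref{itm:identical_histograms}, I would fix an arbitrary histogram $h$ and let $S_h^{(1)}$ (resp.\ $S_h^{(2)}$) be the set of argument tuples of $\phi_1$ (resp.\ $\phi_2$) whose multiset of entries equals $h$. The key observation is that the map $(r_1,\ldots,r_n) \mapsto (r_{\pi(1)},\ldots,r_{\pi(n)})$ is a bijection from $S_h^{(1)}$ to $S_h^{(2)}$: it is well-defined into tuples for $\phi_2$ by Condition~\ref{itm:bijection}, it preserves the underlying multiset of entries (so the image has histogram $h$), and it is invertible via $\pi^{-1}$. By the assumed equality of the factors under $\pi$, corresponding tuples are assigned identical potentials, so the two multisets of potentials indexed by tuples in $S_h^{(1)}$ and $S_h^{(2)}$ coincide. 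Summing (as multisets) over all tuples producing histogram $h$ therefore gives the same multiset for $\phi_1$ and $\phi_2$.

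The proof is conceptually short, and the main obstacle I expect is purely one of careful bookkeeping: making sure that the permutation $\pi$ extracted from the equivalence assumption is used consistently when translating between assignments, ranges, and histograms, and in particular that the histogram invariance is argued as a statement about the multiset of argument values (which is insensitive to the permutation $\pi$) rather than about the ordered tuple. Once this is set up cleanly, both conditions fall out immediately and the theorem follows.
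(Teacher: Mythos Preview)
Your proposal is correct and rests on the same underlying observation as the paper's proof: a permutation of argument positions leaves the multiset (histogram) of argument values unchanged, so equivalent potentials force matching ranges and matching multisets of outputs per histogram. The only difference is presentational---you argue directly from the assumed permutation $\pi$ and construct $\tau$ and the histogram-preserving bijection explicitly, whereas the paper argues both conditions by contrapositive in a couple of sentences; your version is more detailed but not a genuinely different route.
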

\begin{proof}
	If \cref{itm:bijection} is not satisfied, there must be at least one pair of arguments $R_i$ and $R'_j$ such that $\mathcal R(R_i) \neq \mathcal R(R'_j)$, implying that $\phi_1$ and $\phi_2$ are defined over different function domains and hence cannot have identical potentials.
	Regarding \cref{itm:identical_histograms}, we know that the set of possible histograms specifies all possible inputs for a factor if we neglect the order of its arguments.
	Consequently, if there exists a histogram that is mapped to different multisets of values by $\phi_1$ and $\phi_2$, there is no possibility to permute the arguments such that both tables read identical values from top to bottom.
\end{proof}
In other words, \cref{itm:bijection} ensures that the function domains of the two factors are identical when neglecting the order of their arguments, which implies that both factors entail the same set of possible histograms ($\tau$ does not have to be unique).
\Cref{itm:identical_histograms} demands that the mapping from order-independent inputs to outputs is equivalent for both factors---otherwise, there exists no permutation of the arguments such that both factors have identical potentials.
\begin{example}
	Applying \cref{th:identical_potentials} to our example from \cref{fig:input_order_fg}, there are three different possible histograms, which are identical for $\phi_1$ and $\phi_2$ as their arguments satisfy \cref{itm:bijection}.
	We can also verify that all three histograms yield identical multisets of mapped values for $\phi_1$ and $\phi_2$$:$ $[2,0] \mapsto \{\varphi_1\}$, $[1,1] \mapsto \{\varphi_2, \varphi_3\}$, and $[0,2] \mapsto \{\varphi_4\}$.
\end{example}
Note that we use a multiset instead of a set for each histogram because it is possible for a factor to map a histogram to the same value multiple times.
For example, in a scenario with two factors $\phi'_1$ and $\phi'_2$ where $\phi'_1$ maps some histogram $[i,j] \mapsto \{\varphi'_1, \varphi'_1, \varphi'_2\}$ and $\phi'_2$ maps $[i,j] \mapsto \{\varphi'_1, \varphi'_2, \varphi'_2\}$, the factors cannot represent identical potentials and a set with unique elements $\{\varphi'_1, \varphi'_2\}$ is not able to detect such a situation.
The histograms allow us to avoid a naive check of all possible permutations, reducing the required computational effort as we only need to take a look at permutations if the initial histogram-check is passed successfully.
\begin{corollary} \label{cor:identical_potentials}
	If two factors $\phi_1$ and $\phi_2$ do not satisfy \cref{th:identical_potentials}, they cannot represent equivalent potentials.
\end{corollary}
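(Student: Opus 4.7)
The plan is to observe that this corollary is essentially the contrapositive of \cref{th:identical_potentials} and therefore requires no new mathematical content. \Cref{th:identical_potentials} asserts that \emph{if} two factors $\phi_1$ and $\phi_2$ represent equivalent potentials, \emph{then} both \cref{itm:bijection} (the existence of a range-preserving bijection $\tau$) and \cref{itm:identical_histograms} (the agreement of histogram-indexed multisets of potential values) must hold. The corollary simply reads this implication in its contrapositive form: if the conjunction of the two conditions fails, the premise that $\phi_1$ and $\phi_2$ represent equivalent potentials must fail as well.

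First I would fix notation by letting $P$ denote the statement ``$\phi_1$ and $\phi_2$ represent equivalent potentials'' and letting $Q_1, Q_2$ denote \cref{itm:bijection,itm:identical_histograms}, respectively. Then \cref{th:identical_potentials} reads $P \Rightarrow (Q_1 \wedge Q_2)$. By standard propositional logic, this is equivalent to $\neg(Q_1 \wedge Q_2) \Rightarrow \neg P$, which is precisely the statement of the corollary. Concretely, I would argue by contradiction: suppose $\phi_1$ and $\phi_2$ did represent equivalent potentials even though at least one of \cref{itm:bijection,itm:identical_histograms} fails. Then \cref{th:identical_potentials} would force both conditions to hold, contradicting the assumption that one of them fails.

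There is no real obstacle here; the only thing worth being careful about is stating clearly that the two necessary conditions must both hold simultaneously (so that the negation in the corollary covers the case where either one fails). The proof is therefore a one-line appeal to \cref{th:identical_potentials} together with the observation that a statement and its contrapositive are logically equivalent.
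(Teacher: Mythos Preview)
Your proposal is correct and matches the paper's treatment: the paper gives no separate proof for \cref{cor:identical_potentials}, since it is an immediate contrapositive restatement of \cref{th:identical_potentials}. Your observation that $P \Rightarrow (Q_1 \wedge Q_2)$ yields $\neg(Q_1 \wedge Q_2) \Rightarrow \neg P$ is exactly the intended reasoning.
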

In case two factors satisfy \cref{th:identical_potentials}, a sufficient condition for them to represent identical potentials is that there exists a permutation of the arguments of one factor such that their tables read identical values from top to bottom.
\begin{corollary}[Sufficient Condition for Identical Potentials] \label{th:identical_potentials_sc}
	Two factors $\phi_1(R_1, \ldots, R_n)$ and $\phi_2(R'_1, \ldots, R'_n)$ represent equivalent potentials if and only if there exists a permutation $\pi$ of $\{1, \ldots, n\}$ such that for all $r_1, \ldots, r_n \in \times_{i=1}^n \mathcal R(R_i)$ it holds that $\phi_1(r_1, \ldots, r_n) = \phi_2(r_{\pi(1)}, \ldots, r_{\pi(n)})$.
\end{corollary}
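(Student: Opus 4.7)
The statement is a biconditional, so the plan is to prove both directions separately, each essentially by unpacking what it means for two factors to \emph{represent equivalent potentials}. A factor is simply a function from a tuple of range values to $\mathbb{R}^+$, so equivalence of potentials intuitively means the two factors define the same function once one fixes an identification of their arguments; the permutation $\pi$ is exactly the reindexing witnessing this identification.

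For the ``if'' direction, I would assume that a permutation $\pi$ exists satisfying $\phi_1(r_1, \ldots, r_n) = \phi_2(r_{\pi(1)}, \ldots, r_{\pi(n)})$ for all assignments, and conclude directly that, after relabelling the arguments of $\phi_2$ via $\pi^{-1}$, the tables of $\phi_1$ and $\phi_2$ coincide row by row. Hence the two factors encode the same mapping over the identified variable set, which is precisely what ``represent equivalent potentials'' means. This direction is essentially immediate from the definition of a factor as a function of its argument list.

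For the ``only if'' direction, I would start from the assumption that $\phi_1$ and $\phi_2$ represent equivalent potentials. By Theorem~1, a range-preserving bijection $\tau \colon \{R_1, \ldots, R_n\} \to \{R'_1, \ldots, R'_n\}$ must exist. Defining $\pi(i) = j$ whenever $\tau(R_i) = R'_j$ yields a permutation of $\{1, \ldots, n\}$. I would then argue that equivalence of potentials forces, for every tuple $(r_1, \ldots, r_n)$, the equation $\phi_1(r_1, \ldots, r_n) = \phi_2(r_{\pi(1)}, \ldots, r_{\pi(n)})$ to hold: otherwise there would exist a row in which $\phi_1$ and the reindexed $\phi_2$ differ, contradicting the assumption that both factors specify the same function under the identification $\tau$.

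The main obstacle is that the notion of ``equivalent potentials'' is used somewhat informally in the paper, so the proof rests on making this definitional content precise. In particular, in the ``only if'' step one must argue that the bijection $\tau$ from Theorem~1 can be chosen so that it not only equates ranges but also aligns the full potential tables; if several bijections are compatible with condition~(1) of Theorem~1, one has to select the one induced by the actual equality of the underlying functions. Once this is made explicit, the result follows directly and no further combinatorial work (such as checking all $n!$ permutations) is required inside the proof itself.
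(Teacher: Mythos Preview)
The paper does not give a separate proof of this corollary; it is stated immediately after the sentence ``a sufficient condition for them to represent identical potentials is that there exists a permutation of the arguments of one factor such that their tables read identical values from top to bottom'' and is treated as essentially the formalisation of that sentence, i.e., as definitional. Your proposal is therefore not competing with any argument in the paper but rather supplying one where the paper relies on the reader accepting the statement as the operational meaning of ``equivalent potentials.''

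Your approach is sound and your diagnosis of the main obstacle is exactly right: the paper never pins down ``represent equivalent potentials'' formally, so both directions reduce to making that phrase precise. One minor comment on the ``only if'' direction: invoking Theorem~1 to obtain a range-preserving bijection $\tau$ is fine, but as you yourself note, condition~(1) of Theorem~1 does not determine $\tau$ uniquely, and an arbitrary such $\tau$ need not align the tables. It would be cleaner to start directly from the assumed equivalence (which, once made precise, \emph{is} the existence of a table-aligning reindexing) rather than routing through Theorem~1 and then having to re-select $\tau$; Theorem~1 gives necessary conditions and is not needed for this direction.
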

After rearranging the order of arguments and the order of potentials of a factor accordingly to match the potentials of another factor, the \ac{cp} algorithm can be applied without further changes.
Running \ac{cp} on the \ac{fg} shown in \cref{fig:input_order_fg} after rearrangement, both $A$ and $C$ as well as $\phi_1$ and $\phi_2$ are grouped together.
Using histograms to filter candidates before comparing permutations keeps the computational effort as small as possible and enables us to reuse the histograms for the commutativity check \emph{within} a factor over the whole argument list.
Thus, performing an initial histogram-check does not require additional computational costs and avoids naively comparing all permutations for order independence.
Next, we gather the presented solutions to handle symmetries within and between factors into the \ac{cpr} algorithm and show how \acp{lv} are introduced to obtain a valid \ac{pfg}.

\section{Advanced Colour Passing} \label{sec:cpr}
In this section, we combine the insights from the previous section into a modified version of the \ac{cp} algorithm, called \ac{cpr}.
\Cref{alg:cp_revisited} presents the entire \ac{cpr} algorithm, which is explained in more detail in the following.

\begin{algorithm}[t]
	\caption{Advanced Colour Passing}
	\label{alg:cp_revisited}
	\alginput{An \ac{fg} $G$ with \acp{rv} $\boldsymbol R = \{R_1, \ldots, R_n\}$, \\\hspace*{\algorithmicindent} factors $\boldsymbol \Phi = \{\phi_1, \ldots, \phi_m\}$, and evidence \\\hspace*{\algorithmicindent} $\boldsymbol E = \{R_1 = r_1, \ldots, R_k = r_k\}$.} \\
	\algoutput{A lifted representation $G'$ in form of a \ac{pfg} \\\hspace*{\algorithmicindent} entailing equivalent semantics as $G$.}
	\begin{algorithmic}[1]
		\State Assign each $R_i$ a colour according to $\mathcal R(R_i)$ and $\boldsymbol E$\;
		\State Assign each $\phi_i$ a colour according to order-independent potentials and rearrange arguments accordingly\;
		\Repeat
			\For{each factor $\phi \in \boldsymbol \Phi$}
				\State $signature_{\phi} \gets [\,]$\;
				\For{each \ac{rv} $R \in neighbours(G, \phi)$}
					\State\Comment{In order of appearance in $\phi$}\;
					\State $append(signature_{\phi}, R.colour)$\;
				\EndFor
				\State $append(signature_{\phi}, \phi.colour)$\;
			\EndFor
			\State Group together all $\phi$s with the same signature\;
			\State Assign each such cluster a unique colour\;
			\State Set $\phi.colour$ correspondingly for all $\phi$s\;
			\For{each \ac{rv} $R \in \boldsymbol R$}
				\State $signature_{R} \gets [\,]$\;
				\For{each factor $\phi \in neighbours(G, R)$}
					\If{$\phi$ is commutative w.r.t.\ $\boldsymbol S$ and $R \in \boldsymbol S$}
						\State $append(signature_{R}, (\phi.colour, 0))$\;
					\Else
						\State $append(signature_{R}, (\phi.colour, p(R, \phi)))$\;
					\EndIf
				\EndFor
				\State Sort $signature_{R}$ according to colour\;
				\State $append(signature_{R}, R.colour)$\;
			\EndFor
			\State Group together all $R$s with the same signature\;
			\State Assign each such cluster a unique colour\;
			\State Set $R.colour$ correspondingly for all $R$s\;
		\Until{grouping does not change}
		\State $G' \gets$ construct \acs{pfg} from groupings\;
	\end{algorithmic}
\end{algorithm}

\Ac{cpr} begins with the colour assignment to variable nodes, meaning that all \acp{rv} that have the same range and observed event are assigned the same colour.
Thereafter, \ac{cpr} assigns colours to factor nodes such that factors representing identical potentials are assigned the same colour.
Two factors represent identical potentials if they satisfy the conditions given in \cref{th:identical_potentials} and there exists a rearrangement of one of the factor's arguments such that both factors have identical tables of potentials when comparing them row by row.
As shown in the previous section, \ac{cpr} uses histograms to detect factors with identical potentials regardless of the order of their arguments.
In case the arguments of a factor have to be rearranged to obtain identical tables during the comparison, \ac{cpr} uses the positions of the arguments after the rearrangement throughout the message passing procedure afterwards.
\Ac{cpr} rearranges each factor's arguments at most once.
The message passing in \ac{cpr} differs from the message passing in \ac{cp} in the sense that every factor $\phi(R_1, \ldots, R_j)$ that is commutative with respect to a subset of its arguments $\boldsymbol S \subseteq \{R_1, \ldots, R_j\}$ passes the position $p(R, \phi)$ of a \ac{rv} $R$ in $\phi$'s argument list only to \acp{rv} $R \notin \boldsymbol S$.
Every factor $\phi$ passes zero instead of the actual position $p(R, \phi)$ to all \acp{rv} $R \in \boldsymbol S$ to mark commutativity.
$\boldsymbol S$ is a maximal subset for which $\abs{\boldsymbol S} > 1$ must hold as a single argument is always commutative with itself.
All \acp{rv} receiving the position zero in their message are commutative and thus, \ac{cpr} groups them using a \ac{crv}, as we have seen in \cref{sec:intra_factor_symmetries}.
In case there are multiple maximal subsets $\boldsymbol S$, \ac{cpr} chooses any of them.
\Ac{cpr} iterates the message passing until convergence.
In the end, \ac{cpr} transforms all groups of \acp{rv} and factors into \acp{prv} with \acp{lv} and \acp{pf}, respectively, to obtain a valid \ac{pfg} $G'$ entailing equivalent semantics as the initial \ac{fg} $G$.
The construction of $G'$ from the obtained groupings is explained in detail below.

We give the mapping from groups to \acp{prv} and \acp{pf} for the domain-liftable fragment~\citep{VanDenBroeck2011a}, i.e., for \acp{pfg} containing only \acp{pf} in which at most two \acp{lv} appear as well as for \acp{pfg} containing only \acp{prv} having at most one \ac{lv}, because lifted inference algorithms such as \ac{lve} and \ac{ljt} are proven to be complete for this fragment~\citep{Taghipour2013b,Braun2020a}.

Each group of factors $\boldsymbol F$ is replaced by a \ac{pf} $\phi'$ and each group of \acp{rv} $\boldsymbol A$ is replaced by a \ac{prv} $R'$ such that $gr(\phi') = \boldsymbol F$ and $gr(R') = \boldsymbol A$.
The \ac{pfg} $G'$ then contains an edge between a \ac{prv} $R'$ and a \ac{pf} $\phi'$ (i.e., $R'$ appears in the argument list of $\phi'$) if there is a \ac{rv} $R \in gr(R')$ which is connected to a factor $\phi \in gr(\phi')$ in the initial \ac{fg} $G$.
For each \ac{prv} $R'$, the \acp{lv} are introduced depending on the groundings $gr(R')$.
For the introduction of \acp{lv}, we only need to consider \acp{prv} that are not parameterless, i.e., \acp{prv} that represent a group consisting of at least two \acp{rv}.
The exact conditions used by \ac{cpr} for introducing \acp{lv} are given in the following definition.
\begin{definition}[Introduction of Logvars in Randvar Groups] \label{def:lv_introduction}
	Let $\phi'(R'_1(X_{1,1}, \ldots, X_{1,k}), \ldots, R'_j(X_{j,1}, \ldots, X_{j,k}))$ be a new \ac{pf}, build from $\boldsymbol F = \{\phi_1(R_{1,1}, \ldots, R_{1,s}), \allowbreak \ldots, \allowbreak \phi_{\ell}(R_{\ell,1}, \ldots, R_{\ell,s})\}$ and let $\boldsymbol S = \{S'_1, \ldots, S'_z\}$ denote the subset of $\phi'$'s arguments with more than one grounding.
	Then, \ac{cpr} introduces the \acp{lv} of $S'_1, \ldots, S'_z$ as follows.
	\begin{enumerate}
		\item \label{item:single_lv_shared} If $\forall S'_i \in \boldsymbol S$$:$ $\abs{\boldsymbol F} = \abs{gr(S'_i)}$, all $S'_i \in \boldsymbol S$ have exactly one \ac{lv} which is identical for all $S'_i \in \boldsymbol S$.
		\item \label{item:single_lv_distinct} If $\forall S'_i \in \boldsymbol S$$:$ $\abs{\boldsymbol F} \neq \abs{gr(S'_i)}$, $S'_1, \ldots, S'_z$ have exactly one \ac{lv}.
		$S'_a \in \boldsymbol S$ and $S'_b \in \boldsymbol S$ share the same \ac{lv} if and only if $\abs{gr(S'_a)} = \abs{gr(S'_b)}$ and there exists a bijection $\tau$$:$ $gr(S'_a) \to gr(S'_b)$ such that $\tau$ maps every $S_a \in gr(S'_a)$ to $S_b \in gr(S'_b)$ with $\mathcal F(S_a) = \mathcal F(S_b)$ where $\mathcal F(S) = \{\phi(R_1, \ldots, R_s) \in \boldsymbol F \mid S \in \{R_1, \ldots, R_s\}\}$. 
		\item \label{item:two_lvs} If $\exists S'_u \in \boldsymbol S$$:$ $\abs{\boldsymbol F} \neq \abs{gr(S'_u)} \land \exists S'_v \in \boldsymbol S$$:$ $\abs{\boldsymbol F} = \abs{gr(S'_v)}$, all $S'_i \in \boldsymbol S$ with $\abs{\boldsymbol F} = \abs{gr(S'_i)}$ have two \acp{lv}.
		The remaining $S'_i \in \boldsymbol S$ have exactly one \ac{lv} and share the same \ac{lv} under the same conditions as in \cref{item:single_lv_distinct}.
	\end{enumerate}
	For each \ac{prv} $S'_i \in \boldsymbol S$ with a single \ac{lv} $X$, choose $\mathcal D(X)$ such that $\abs{gr(S'_i)} = \abs{\mathcal D(X)}$.
	For each \ac{prv} $S'_i \in \boldsymbol S$ with two \acp{lv} $X_1$ and $X_2$, choose $\mathcal D(X_1)$ and $\mathcal D(X_2)$ such that $\abs{gr(S'_i)} = \abs{\mathcal D(X_1)} \cdot \abs{\mathcal D(X_2)}$.
\end{definition}
The intuition behind \cref{def:lv_introduction} is that after building $\phi'$ from $\boldsymbol F$, it must hold that $gr(\phi') = \boldsymbol F$ to ensure that $G'$ entails equivalent semantics as $G$.
Introducing \acp{lv} according to \cref{def:lv_introduction} ensures that grounding $G'$ results in a model equivalent to $G$ for the domain-liftable fragment.
\begin{theorem} \label{th:lv_introduction}
	\Ac{cpr} returns a valid \ac{pfg} entailing equivalent semantics as the initial \ac{fg} for the domain-liftable fragment.
\end{theorem}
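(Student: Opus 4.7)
The plan is to split the claim into two parts: (i) the output $G'$ is syntactically a valid \ac{pfg} under the definitions given in \cref{sec:prelim}, and (ii) $G'$ is semantically equivalent to $G$ in the sense that $gr(G')$ coincides with the multiset of factors of $G$ (with the correct incidence structure), which by the semantics of \acp{pfg} implies $P_{G'} = P_G$. The domain-liftable fragment assumption is used only in step~(i), to guarantee that the three cases in \cref{def:lv_introduction} cover every factor group that can arise.

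For validity, I would first argue that every factor colour class $\boldsymbol F$ is well-defined as a single \ac{pf}. By line~2 of \cref{alg:cp_revisited}, two factors receive the same initial colour only if they satisfy the necessary conditions of \cref{th:identical_potentials} and admit a permutation of their arguments making their tables identical row by row; by \cref{th:identical_potentials_sc}, these factors then represent identical potentials, so all elements of $\boldsymbol F$ agree on a common potential function $\phi$. After the message-passing fixed point, \acp{rv} grouped into the same \ac{prv} $R'$ share range and evidence (by line~1) and receive identical signatures, so the range of $R'$ is well-defined. The edge set of $G'$ is induced from $G$ via the groupings: there is an edge $R' \sim \phi'$ precisely when some $R \in gr(R')$ is adjacent to some $\phi \in gr(\phi')$ in $G$; the signature agreement at the fixed point ensures this is consistent across all representatives. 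Commutative positions are recorded via the $\boldsymbol S$ marker (line~18), which justifies replacing the commutative sub-block by a \ac{crv} in the \ac{pf}, as discussed in \cref{sec:intra_factor_symmetries}.

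For semantic equivalence, it suffices to prove that for every new \ac{pf} $\phi'$ built from a factor group $\boldsymbol F$, we have $gr(\phi') = \boldsymbol F$ together with the correct attachment to ground \acp{rv}. I would argue this by verifying each case of \cref{def:lv_introduction} separately. In \cref{item:single_lv_shared}, a single shared \ac{lv} $X$ with $|\mathcal{D}(X)| = |\boldsymbol F|$ indexes the factors, and since every argument PRV has $|gr(S'_i)| = |\boldsymbol F|$, grounding $X$ picks out exactly one original factor and its argument tuple; ACP's colour convergence guarantees that this correspondence is bijective. In \cref{item:single_lv_distinct}, each $S'_i$ has its own \ac{lv}, and two arguments share a \ac{lv} exactly when their groundings co-occur in the same factors via the bijection $\tau$ on $\mathcal F(\cdot)$, which is precisely the condition needed for grounding to respect the original incidence. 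In \cref{item:two_lvs}, arguments with $|gr(S'_i)| = |\boldsymbol F|$ receive two \acp{lv} whose product of domain sizes equals $|\boldsymbol F|$, reflecting that such an argument varies independently across both indexing axes, while the remaining arguments are treated as in \cref{item:single_lv_distinct}. In each case a direct counting argument shows $|gr(\phi')| = |\boldsymbol F|$, and the shared-\ac{lv} structure guarantees that the grounding also reproduces the attachments.

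The main obstacle will be \cref{item:two_lvs}: showing that precisely two \acp{lv} suffice and that the factorisation $|gr(S'_i)| = |\mathcal{D}(X_1)| \cdot |\mathcal{D}(X_2)|$ is consistent with the groundings of the other arguments. This is where the domain-liftable restriction is essential, since outside this fragment one might require \acp{pf} with three or more \acp{lv}, breaking the case split. Within the fragment, I would justify the two-\ac{lv} choice by observing that any argument with $|gr(S'_i)| = |\boldsymbol F|$ while some other argument has $|gr(S'_j)| < |\boldsymbol F|$ must vary along both the factor-indexing axis and the ``per-other-argument'' axis, and these two axes are exactly captured by $X_1$ and $X_2$. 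Once this structural observation is in place, a routine verification of the edge set of $gr(G')$ against $G$ closes the proof.
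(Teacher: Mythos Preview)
Your proposal is correct and follows essentially the same route as the paper: both proofs reduce to verifying that the three cases of \cref{def:lv_introduction} each yield $gr(\phi') = \boldsymbol F$, with the domain-liftable restriction doing the real work in \cref{item:two_lvs}. The paper's version is a bit more mechanical in each case---it argues by contradiction (e.g., assuming two distinct \acp{lv} in \cref{item:single_lv_shared} forces $|gr(\phi')| \ge |gr(S'_a)|\cdot|gr(S'_b)| > |\boldsymbol F|$) and in \cref{item:two_lvs} uses the identity $|\boldsymbol F| = \prod_{X \in lv(\phi')} |\mathcal D(X)|$ to conclude that any $S'_v$ with $|gr(S'_v)| = |\boldsymbol F|$ must carry \emph{all} \acp{lv} of $\phi'$, which by the at-most-two-\acp{lv} bound means exactly two---whereas your ``two axes'' picture is the same observation stated informally. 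Your added validity preamble (well-definedness of the common potential via \cref{th:identical_potentials_sc}, consistency of the edge set at the colour fixed point) is not spelled out in the paper's proof but is a sensible addition.
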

We give a proof for \cref{th:lv_introduction} in \cref{appendix:logvars}.
\Cref{th:lv_introduction} gives us the theoretical guarantees that \ac{cpr} introduces \acp{lv} correctly to obtain a valid \ac{pfg}.
In the next section, we demonstrate the effectiveness \ac{cpr} in practice.

\section{Experiments} \label{sec:eval}
In addition to the theoretical results, we show that \ac{cpr} is able to drastically reduce online query times in practice.
We evaluate the impact of symmetries within factors and between factors on query times separately.
\Cref{fig:plot-results-intra-k=1,fig:plot-results-inter-p=3} display the experimental results.
In both plots, we report average query times of \ac{lve} on the resulting \ac{pfg} after running \ac{cpr}, denoted as \acs{lve} (\acs{cpr}), of \ac{lve} on the resulting \ac{pfg} after running \ac{cp}\footnote{\Ac{cp} itself does not construct a valid \ac{pfg}, so we additionally applied the steps from \cref{def:lv_introduction} on the result obtained from \ac{cp}.}, denoted as \acs{lve} (\acs{cp}), and of \acl{ve} on the initial \ac{fg} (\acs{ve}).
The average query times are given by the lines and the ribbon around the lines indicates the standard deviation.
In both plots, the y-axis uses a logarithmic scale.
We provide the data set generators along with our source code in the supplementary material.

\begin{figure}[t]
	\centering
	\begin{tikzpicture}[x=1pt,y=1pt]
		\definecolor{fillColor}{RGB}{255,255,255}
		\path[use as bounding box,fill=fillColor,fill opacity=0.00] (0,15) rectangle (238.49,115.63);
		\begin{scope}
		\path[clip] (  0.00,  0.00) rectangle (238.49,115.63);
		\definecolor{drawColor}{RGB}{255,255,255}
		\definecolor{fillColor}{RGB}{255,255,255}
		
		\path[draw=drawColor,line width= 0.6pt,line join=round,line cap=round,fill=fillColor] (  0.00,  0.00) rectangle (238.49,115.63);
		\end{scope}
		\begin{scope}
		\path[clip] ( 40.51, 30.69) rectangle (232.99,110.13);
		\definecolor{fillColor}{RGB}{255,255,255}
		
		\path[fill=fillColor] ( 40.51, 30.69) rectangle (232.99,110.13);
		\definecolor{drawColor}{RGB}{247,192,26}
		
		\path[draw=drawColor,line width= 0.6pt,line join=round] ( 49.26, 42.99) --
			( 68.70, 46.18) --
			(107.59, 45.32) --
			(146.47, 46.95) --
			(185.36, 46.98) --
			(224.24, 47.05);
		\definecolor{drawColor}{RGB}{78,155,133}
		
		\path[draw=drawColor,line width= 0.6pt,dash pattern=on 2pt off 2pt ,line join=round] ( 49.26, 40.70) --
			( 68.70, 45.37) --
			(107.59, 53.76) --
			(146.47, 61.26) --
			(185.36, 69.16) --
			(224.24, 83.97);
		\definecolor{drawColor}{RGB}{37,122,164}
		
		\path[draw=drawColor,line width= 0.6pt,dash pattern=on 4pt off 2pt ,line join=round] ( 49.26, 36.29) --
			( 68.70, 38.87) --
			(107.59, 51.35) --
			(146.47, 61.52) --
			(185.36, 80.30) --
			(224.24,105.45);
		\definecolor{fillColor}{RGB}{78,155,133}
		
		\path[fill=fillColor] (107.59, 56.81) --
			(110.23, 52.23) --
			(104.94, 52.23) --
			cycle;
		
		\path[fill=fillColor] (146.47, 64.31) --
			(149.11, 59.74) --
			(143.83, 59.74) --
			cycle;
		
		\path[fill=fillColor] (224.24, 87.02) --
			(226.88, 82.44) --
			(221.60, 82.44) --
			cycle;
		
		\path[fill=fillColor] ( 68.70, 48.42) --
			( 71.34, 43.84) --
			( 66.06, 43.84) --
			cycle;
		
		\path[fill=fillColor] (185.36, 72.21) --
			(188.00, 67.63) --
			(182.71, 67.63) --
			cycle;
		
		\path[fill=fillColor] ( 49.26, 43.75) --
			( 51.90, 39.17) --
			( 46.62, 39.17) --
			cycle;
		\definecolor{fillColor}{RGB}{247,192,26}
		
		\path[fill=fillColor] (107.59, 45.32) circle (  1.96);
		
		\path[fill=fillColor] (146.47, 46.95) circle (  1.96);
		
		\path[fill=fillColor] (224.24, 47.05) circle (  1.96);
		
		\path[fill=fillColor] ( 68.70, 46.18) circle (  1.96);
		
		\path[fill=fillColor] (185.36, 46.98) circle (  1.96);
		
		\path[fill=fillColor] ( 49.26, 42.99) circle (  1.96);
		\definecolor{fillColor}{RGB}{37,122,164}
		
		\path[fill=fillColor] (105.62, 49.39) --
			(109.55, 49.39) --
			(109.55, 53.31) --
			(105.62, 53.31) --
			cycle;
		
		\path[fill=fillColor] (144.51, 59.56) --
			(148.43, 59.56) --
			(148.43, 63.48) --
			(144.51, 63.48) --
			cycle;
		
		\path[fill=fillColor] (222.28,103.49) --
			(226.20,103.49) --
			(226.20,107.42) --
			(222.28,107.42) --
			cycle;
		
		\path[fill=fillColor] ( 66.74, 36.91) --
			( 70.66, 36.91) --
			( 70.66, 40.83) --
			( 66.74, 40.83) --
			cycle;
		
		\path[fill=fillColor] (183.39, 78.33) --
			(187.32, 78.33) --
			(187.32, 82.26) --
			(183.39, 82.26) --
			cycle;
		
		\path[fill=fillColor] ( 47.30, 34.33) --
			( 51.22, 34.33) --
			( 51.22, 38.26) --
			( 47.30, 38.26) --
			cycle;
		\definecolor{fillColor}{RGB}{247,192,26}
		
		\path[fill=fillColor,fill opacity=0.20] ( 49.26, 44.51) --
			( 68.70, 48.82) --
			(107.59, 47.63) --
			(146.47, 48.84) --
			(185.36, 49.59) --
			(224.24, 49.62) --
			(224.24, 43.67) --
			(185.36, 43.55) --
			(146.47, 44.67) --
			(107.59, 42.38) --
			( 68.70, 42.70) --
			( 49.26, 41.22) --
			cycle;
		
		\path[] ( 49.26, 44.51) --
			( 68.70, 48.82) --
			(107.59, 47.63) --
			(146.47, 48.84) --
			(185.36, 49.59) --
			(224.24, 49.62);
		
		\path[] (224.24, 43.67) --
			(185.36, 43.55) --
			(146.47, 44.67) --
			(107.59, 42.38) --
			( 68.70, 42.70) --
			( 49.26, 41.22);
		\definecolor{fillColor}{RGB}{78,155,133}
		
		\path[fill=fillColor,fill opacity=0.20] ( 49.26, 42.39) --
			( 68.70, 47.02) --
			(107.59, 56.20) --
			(146.47, 63.13) --
			(185.36, 72.34) --
			(224.24, 90.91) --
			(224.24, 59.99) --
			(185.36, 64.63) --
			(146.47, 59.01) --
			(107.59, 50.61) --
			( 68.70, 43.41) --
			( 49.26, 38.69) --
			cycle;
		
		\path[] ( 49.26, 42.39) --
			( 68.70, 47.02) --
			(107.59, 56.20) --
			(146.47, 63.13) --
			(185.36, 72.34) --
			(224.24, 90.91);
		
		\path[] (224.24, 59.99) --
			(185.36, 64.63) --
			(146.47, 59.01) --
			(107.59, 50.61) --
			( 68.70, 43.41) --
			( 49.26, 38.69);
		\definecolor{fillColor}{RGB}{37,122,164}
		
		\path[fill=fillColor,fill opacity=0.20] ( 49.26, 37.98) --
			( 68.70, 41.08) --
			(107.59, 52.74) --
			(146.47, 62.42) --
			(185.36, 80.46) --
			(224.24,106.52) --
			(224.24,104.27) --
			(185.36, 80.13) --
			(146.47, 60.53) --
			(107.59, 49.75) --
			( 68.70, 36.09) --
			( 49.26, 34.30) --
			cycle;
		
		\path[] ( 49.26, 37.98) --
			( 68.70, 41.08) --
			(107.59, 52.74) --
			(146.47, 62.42) --
			(185.36, 80.46) --
			(224.24,106.52);
		
		\path[] (224.24,104.27) --
			(185.36, 80.13) --
			(146.47, 60.53) --
			(107.59, 49.75) --
			( 68.70, 36.09) --
			( 49.26, 34.30);
		\end{scope}
		\begin{scope}
		\path[clip] (  0.00,  0.00) rectangle (238.49,115.63);
		\definecolor{drawColor}{RGB}{0,0,0}
		
		\path[draw=drawColor,line width= 0.6pt,line join=round] ( 40.51, 30.69) --
			( 40.51,110.13);
		
		\path[draw=drawColor,line width= 0.6pt,line join=round] ( 41.93,107.67) --
			( 40.51,110.13) --
			( 39.09,107.67);
		\end{scope}
		\begin{scope}
		\path[clip] (  0.00,  0.00) rectangle (238.49,115.63);
		\definecolor{drawColor}{gray}{0.30}
		
		\node[text=drawColor,anchor=base east,inner sep=0pt, outer sep=0pt, scale=  0.88] at ( 35.56, 33.32) {10};
		
		\node[text=drawColor,anchor=base east,inner sep=0pt, outer sep=0pt, scale=  0.88] at ( 35.56, 58.45) {100};
		
		\node[text=drawColor,anchor=base east,inner sep=0pt, outer sep=0pt, scale=  0.88] at ( 35.56, 83.59) {1000};
		\end{scope}
		\begin{scope}
		\path[clip] (  0.00,  0.00) rectangle (238.49,115.63);
		\definecolor{drawColor}{gray}{0.20}
		
		\path[draw=drawColor,line width= 0.6pt,line join=round] ( 37.76, 36.35) --
			( 40.51, 36.35);
		
		\path[draw=drawColor,line width= 0.6pt,line join=round] ( 37.76, 61.48) --
			( 40.51, 61.48);
		
		\path[draw=drawColor,line width= 0.6pt,line join=round] ( 37.76, 86.62) --
			( 40.51, 86.62);
		\end{scope}
		\begin{scope}
		\path[clip] (  0.00,  0.00) rectangle (238.49,115.63);
		\definecolor{drawColor}{RGB}{0,0,0}
		
		\path[draw=drawColor,line width= 0.6pt,line join=round] ( 40.51, 30.69) --
			(232.99, 30.69);
		
		\path[draw=drawColor,line width= 0.6pt,line join=round] (230.53, 29.26) --
			(232.99, 30.69) --
			(230.53, 32.11);
		\end{scope}
		\begin{scope}
		\path[clip] (  0.00,  0.00) rectangle (238.49,115.63);
		\definecolor{drawColor}{gray}{0.20}
		
		\path[draw=drawColor,line width= 0.6pt,line join=round] ( 78.42, 27.94) --
			( 78.42, 30.69);
		
		\path[draw=drawColor,line width= 0.6pt,line join=round] (127.03, 27.94) --
			(127.03, 30.69);
		
		\path[draw=drawColor,line width= 0.6pt,line join=round] (175.64, 27.94) --
			(175.64, 30.69);
		
		\path[draw=drawColor,line width= 0.6pt,line join=round] (224.24, 27.94) --
			(224.24, 30.69);
		\end{scope}
		\begin{scope}
		\path[clip] (  0.00,  0.00) rectangle (238.49,115.63);
		\definecolor{drawColor}{gray}{0.30}
		
		\node[text=drawColor,anchor=base,inner sep=0pt, outer sep=0pt, scale=  0.88] at ( 78.42, 19.68) {5};
		
		\node[text=drawColor,anchor=base,inner sep=0pt, outer sep=0pt, scale=  0.88] at (127.03, 19.68) {10};
		
		\node[text=drawColor,anchor=base,inner sep=0pt, outer sep=0pt, scale=  0.88] at (175.64, 19.68) {15};
		
		\node[text=drawColor,anchor=base,inner sep=0pt, outer sep=0pt, scale=  0.88] at (224.24, 19.68) {20};
		\end{scope}
		\begin{scope}
		\path[clip] (  0.00,  0.00) rectangle (238.49,115.63);
		\definecolor{drawColor}{RGB}{0,0,0}
		
		\node[text=drawColor,anchor=base,inner sep=0pt, outer sep=0pt, scale=  1.10] at (136.75,  7.64) {$d$};
		\end{scope}
		\begin{scope}
		\path[clip] (  0.00,  0.00) rectangle (238.49,115.63);
		\definecolor{drawColor}{RGB}{0,0,0}
		
		\node[text=drawColor,rotate= 90.00,anchor=base,inner sep=0pt, outer sep=0pt, scale=  1.10] at ( 13.08, 70.41) {time (ms)};
		\end{scope}
		\begin{scope}
		\path[clip] (  0.00,  0.00) rectangle (238.49,115.63);
		
		\path[] ( 37.04, 71.03) rectangle (109.43,125.40);
		\end{scope}
		\begin{scope}
		\path[clip] (  0.00,  0.00) rectangle (238.49,115.63);
		\definecolor{drawColor}{RGB}{247,192,26}
		
		\path[draw=drawColor,line width= 0.6pt,line join=round] ( 43.98,112.67) -- ( 55.55,112.67);
		\end{scope}
		\begin{scope}
		\path[clip] (  0.00,  0.00) rectangle (238.49,115.63);
		\definecolor{fillColor}{RGB}{247,192,26}
		
		\path[fill=fillColor] ( 49.76,112.67) circle (  1.96);
		\end{scope}
		\begin{scope}
		\path[clip] (  0.00,  0.00) rectangle (238.49,115.63);
		\definecolor{drawColor}{RGB}{78,155,133}
		
		\path[draw=drawColor,line width= 0.6pt,dash pattern=on 2pt off 2pt ,line join=round] ( 43.98, 98.22) -- ( 55.55, 98.22);
		\end{scope}
		\begin{scope}
		\path[clip] (  0.00,  0.00) rectangle (238.49,115.63);
		\definecolor{fillColor}{RGB}{78,155,133}
		
		\path[fill=fillColor] ( 49.76,101.27) --
			( 52.41, 96.69) --
			( 47.12, 96.69) --
			cycle;
		\end{scope}
		\begin{scope}
		\path[clip] (  0.00,  0.00) rectangle (238.49,115.63);
		\definecolor{drawColor}{RGB}{37,122,164}
		
		\path[draw=drawColor,line width= 0.6pt,dash pattern=on 4pt off 2pt ,line join=round] ( 43.98, 83.76) -- ( 55.55, 83.76);
		\end{scope}
		\begin{scope}
		\path[clip] (  0.00,  0.00) rectangle (238.49,115.63);
		\definecolor{fillColor}{RGB}{37,122,164}
		
		\path[fill=fillColor] ( 47.80, 81.80) --
			( 51.73, 81.80) --
			( 51.73, 85.72) --
			( 47.80, 85.72) --
			cycle;
		\end{scope}
		\begin{scope}
		\path[clip] (  0.00,  0.00) rectangle (238.49,115.63);
		\definecolor{drawColor}{RGB}{0,0,0}
		
		\node[text=drawColor,anchor=base west,inner sep=0pt, outer sep=0pt, scale=  0.80] at ( 62.49,109.91) {LVE (ACP)};
		\end{scope}
		\begin{scope}
		\path[clip] (  0.00,  0.00) rectangle (238.49,115.63);
		\definecolor{drawColor}{RGB}{0,0,0}
		
		\node[text=drawColor,anchor=base west,inner sep=0pt, outer sep=0pt, scale=  0.80] at ( 62.49, 95.46) {LVE (CP)};
		\end{scope}
		\begin{scope}
		\path[clip] (  0.00,  0.00) rectangle (238.49,115.63);
		\definecolor{drawColor}{RGB}{0,0,0}
		
		\node[text=drawColor,anchor=base west,inner sep=0pt, outer sep=0pt, scale=  0.80] at ( 62.49, 81.01) {VE};
		\end{scope}
	\end{tikzpicture}	
	\caption{Average query times and their standard deviation of \ac{lve} on the output of \ac{cp}, \ac{lve} on the output of \ac{cpr}, and \ac{ve} on the initial (propositional) \ac{fg} for input \acp{fg} containing a single commutative factor.}
	\label{fig:plot-results-intra-k=1}
\end{figure}
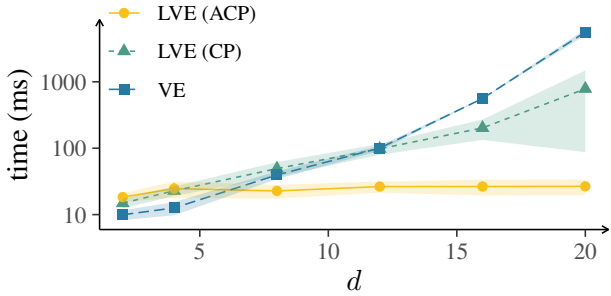

The data set used in \cref{fig:plot-results-intra-k=1} consists of \acp{fg} containing between five and 102 factors of which exactly one factor is commutative.
More specifically, for each domain size $d = 2, 4, 8, 12, 16, 20$, there are between $2d + 1$ and $d \cdot \lfloor \log_2(d) \rfloor + 2d + 2$ Boolean \acp{rv} and between $2d + 1$ and $d \cdot \lfloor \log_2(d) \rfloor + d + 2$ factors in the \acp{fg}.
All factors representing identical potentials have exactly the same tables when comparing them row by row, that is, there are no symmetries between factors which cannot be detected by \ac{cp} and hence, this is the ideal scenario for \ac{cp} (although unrealistic in practice).
The maximum number of arguments of a factor is $d+1$, i.e., there are at most $2^{d+1}$ input-output pairs for a factor.
For each choice of $d$, we evaluate multiple \acp{fg} by posing two queries per \ac{fg} and then report the average run time and standard deviation over all queries for that choice of $d$.
\Cref{fig:plot-results-intra-k=1} demonstrates that \ac{cpr} yields a significant speedup (up to factor 30) compared to \ac{cp} even though there is just a single commutative factor in the input \ac{fg} and the potentials of the factors are specified in an optimal way for \ac{cp}.
The results indicate that \ac{cp} imposes significant scalability issues for \acp{fg} containing commutative factors, even for rather small domain sizes.
Thus, \ac{cpr} is a major step to get a grip on scalability issues.
Unsurprisingly, \ac{ve} is the slowest of all algorithms.
\cref{appendix:more_eval} provides additional results for \acp{fg} containing more than one commutative factor.

\begin{figure}[t]
	\centering
	\begin{tikzpicture}[x=1pt,y=1pt]
		\definecolor{fillColor}{RGB}{255,255,255}
		\path[use as bounding box,fill=fillColor,fill opacity=0.00] (0,15) rectangle (238.49,115.63);
		\begin{scope}
		\path[clip] (  0.00,  0.00) rectangle (238.49,115.63);
		\definecolor{drawColor}{RGB}{255,255,255}
		\definecolor{fillColor}{RGB}{255,255,255}
		
		\path[draw=drawColor,line width= 0.6pt,line join=round,line cap=round,fill=fillColor] (  0.00,  0.00) rectangle (238.49,115.63);
		\end{scope}
		\begin{scope}
		\path[clip] ( 40.51, 30.69) rectangle (232.99,110.13);
		\definecolor{fillColor}{RGB}{255,255,255}
		
		\path[fill=fillColor] ( 40.51, 30.69) rectangle (232.99,110.13);
		\definecolor{drawColor}{RGB}{247,192,26}
		
		\path[draw=drawColor,line width= 0.6pt,line join=round] ( 49.26, 44.83) --
			( 49.60, 45.29) --
			( 50.29, 45.43) --
			( 50.97, 46.09) --
			( 51.66, 44.72) --
			( 52.34, 45.56) --
			( 54.40, 45.17) --
			( 59.87, 45.46) --
			( 70.83, 45.63) --
			( 92.75, 48.14) --
			(136.58, 48.71) --
			(224.24, 50.10);
		\definecolor{drawColor}{RGB}{78,155,133}
		
		\path[draw=drawColor,line width= 0.6pt,dash pattern=on 2pt off 2pt ,line join=round] ( 49.26, 43.73) --
			( 49.60, 44.83) --
			( 50.29, 45.05) --
			( 50.97, 46.85) --
			( 51.66, 47.15) --
			( 52.34, 49.26) --
			( 54.40, 51.92) --
			( 59.87, 53.05) --
			( 70.83, 58.18) --
			( 92.75, 64.89) --
			(136.58, 75.28) --
			(224.24, 85.91);
		\definecolor{drawColor}{RGB}{37,122,164}
		
		\path[draw=drawColor,line width= 0.6pt,dash pattern=on 4pt off 2pt ,line join=round] ( 49.26, 41.60) --
			( 49.60, 45.92) --
			( 50.29, 50.49) --
			( 50.97, 53.29) --
			( 51.66, 55.78) --
			( 52.34, 56.86) --
			( 54.40, 59.47) --
			( 59.87, 65.13) --
			( 70.83, 70.72) --
			( 92.75, 77.45) --
			(136.58, 87.49) --
			(224.24,100.32);
		\definecolor{fillColor}{RGB}{78,155,133}
		
		\path[fill=fillColor] ( 54.40, 54.98) --
			( 57.04, 50.40) --
			( 51.75, 50.40) --
			cycle;
		
		\path[fill=fillColor] ( 50.97, 49.90) --
			( 53.61, 45.32) --
			( 48.33, 45.32) --
			cycle;
		
		\path[fill=fillColor] ( 52.34, 52.31) --
			( 54.98, 47.73) --
			( 49.70, 47.73) --
			cycle;
		
		\path[fill=fillColor] ( 92.75, 67.94) --
			( 95.39, 63.36) --
			( 90.11, 63.36) --
			cycle;
		
		\path[fill=fillColor] ( 49.60, 47.88) --
			( 52.24, 43.30) --
			( 46.96, 43.30) --
			cycle;
		
		\path[fill=fillColor] ( 51.66, 50.21) --
			( 54.30, 45.63) --
			( 49.01, 45.63) --
			cycle;
		
		\path[fill=fillColor] (136.58, 78.33) --
			(139.22, 73.75) --
			(133.94, 73.75) --
			cycle;
		
		\path[fill=fillColor] ( 70.83, 61.23) --
			( 73.48, 56.65) --
			( 68.19, 56.65) --
			cycle;
		
		\path[fill=fillColor] ( 50.29, 48.10) --
			( 52.93, 43.52) --
			( 47.64, 43.52) --
			cycle;
		
		\path[fill=fillColor] ( 59.87, 56.10) --
			( 62.52, 51.52) --
			( 57.23, 51.52) --
			cycle;
		
		\path[fill=fillColor] (224.24, 88.96) --
			(226.88, 84.38) --
			(221.60, 84.38) --
			cycle;
		
		\path[fill=fillColor] ( 49.26, 46.79) --
			( 51.90, 42.21) --
			( 46.62, 42.21) --
			cycle;
		\definecolor{fillColor}{RGB}{247,192,26}
		
		\path[fill=fillColor] ( 54.40, 45.17) circle (  1.96);
		
		\path[fill=fillColor] ( 50.97, 46.09) circle (  1.96);
		
		\path[fill=fillColor] ( 52.34, 45.56) circle (  1.96);
		
		\path[fill=fillColor] ( 92.75, 48.14) circle (  1.96);
		
		\path[fill=fillColor] ( 49.60, 45.29) circle (  1.96);
		
		\path[fill=fillColor] ( 51.66, 44.72) circle (  1.96);
		
		\path[fill=fillColor] (136.58, 48.71) circle (  1.96);
		
		\path[fill=fillColor] ( 70.83, 45.63) circle (  1.96);
		
		\path[fill=fillColor] ( 50.29, 45.43) circle (  1.96);
		
		\path[fill=fillColor] ( 59.87, 45.46) circle (  1.96);
		
		\path[fill=fillColor] (224.24, 50.10) circle (  1.96);
		
		\path[fill=fillColor] ( 49.26, 44.83) circle (  1.96);
		\definecolor{fillColor}{RGB}{37,122,164}
		
		\path[fill=fillColor] ( 52.43, 57.51) --
			( 56.36, 57.51) --
			( 56.36, 61.43) --
			( 52.43, 61.43) --
			cycle;
		
		\path[fill=fillColor] ( 49.01, 51.33) --
			( 52.93, 51.33) --
			( 52.93, 55.25) --
			( 49.01, 55.25) --
			cycle;
		
		\path[fill=fillColor] ( 50.38, 54.90) --
			( 54.30, 54.90) --
			( 54.30, 58.82) --
			( 50.38, 58.82) --
			cycle;
		
		\path[fill=fillColor] ( 90.79, 75.49) --
			( 94.71, 75.49) --
			( 94.71, 79.41) --
			( 90.79, 79.41) --
			cycle;
		
		\path[fill=fillColor] ( 47.64, 43.96) --
			( 51.56, 43.96) --
			( 51.56, 47.88) --
			( 47.64, 47.88) --
			cycle;
		
		\path[fill=fillColor] ( 49.69, 53.82) --
			( 53.62, 53.82) --
			( 53.62, 57.74) --
			( 49.69, 57.74) --
			cycle;
		
		\path[fill=fillColor] (134.62, 85.53) --
			(138.54, 85.53) --
			(138.54, 89.45) --
			(134.62, 89.45) --
			cycle;
		
		\path[fill=fillColor] ( 68.87, 68.76) --
			( 72.79, 68.76) --
			( 72.79, 72.69) --
			( 68.87, 72.69) --
			cycle;
		
		\path[fill=fillColor] ( 48.32, 48.53) --
			( 52.25, 48.53) --
			( 52.25, 52.46) --
			( 48.32, 52.46) --
			cycle;
		
		\path[fill=fillColor] ( 57.91, 63.17) --
			( 61.84, 63.17) --
			( 61.84, 67.09) --
			( 57.91, 67.09) --
			cycle;
		
		\path[fill=fillColor] (222.28, 98.36) --
			(226.20, 98.36) --
			(226.20,102.28) --
			(222.28,102.28) --
			cycle;
		
		\path[fill=fillColor] ( 47.30, 39.64) --
			( 51.22, 39.64) --
			( 51.22, 43.57) --
			( 47.30, 43.57) --
			cycle;
		\definecolor{fillColor}{RGB}{247,192,26}
		
		\path[fill=fillColor,fill opacity=0.20] ( 49.26, 46.93) --
			( 49.60, 47.48) --
			( 50.29, 47.78) --
			( 50.97, 49.59) --
			( 51.66, 46.36) --
			( 52.34, 48.51) --
			( 54.40, 47.52) --
			( 59.87, 47.80) --
			( 70.83, 47.93) --
			( 92.75, 51.82) --
			(136.58, 51.90) --
			(224.24, 53.08) --
			(224.24, 46.03) --
			(136.58, 44.22) --
			( 92.75, 42.62) --
			( 70.83, 42.72) --
			( 59.87, 42.51) --
			( 54.40, 42.17) --
			( 52.34, 41.54) --
			( 51.66, 42.79) --
			( 50.97, 40.97) --
			( 50.29, 42.46) --
			( 49.60, 42.57) --
			( 49.26, 42.24) --
			cycle;
		
		\path[] ( 49.26, 46.93) --
			( 49.60, 47.48) --
			( 50.29, 47.78) --
			( 50.97, 49.59) --
			( 51.66, 46.36) --
			( 52.34, 48.51) --
			( 54.40, 47.52) --
			( 59.87, 47.80) --
			( 70.83, 47.93) --
			( 92.75, 51.82) --
			(136.58, 51.90) --
			(224.24, 53.08);
		
		\path[] (224.24, 46.03) --
			(136.58, 44.22) --
			( 92.75, 42.62) --
			( 70.83, 42.72) --
			( 59.87, 42.51) --
			( 54.40, 42.17) --
			( 52.34, 41.54) --
			( 51.66, 42.79) --
			( 50.97, 40.97) --
			( 50.29, 42.46) --
			( 49.60, 42.57) --
			( 49.26, 42.24);
		\definecolor{fillColor}{RGB}{78,155,133}
		
		\path[fill=fillColor,fill opacity=0.20] ( 49.26, 45.58) --
			( 49.60, 47.06) --
			( 50.29, 47.62) --
			( 50.97, 49.76) --
			( 51.66, 51.12) --
			( 52.34, 51.81) --
			( 54.40, 56.90) --
			( 59.87, 56.78) --
			( 70.83, 61.87) --
			( 92.75, 67.50) --
			(136.58, 80.30) --
			(224.24, 92.24) --
			(224.24, 69.79) --
			(136.58, 65.91) --
			( 92.75, 61.47) --
			( 70.83, 52.63) --
			( 59.87, 47.38) --
			( 54.40, 42.72) --
			( 52.34, 45.95) --
			( 51.66, 40.94) --
			( 50.97, 42.91) --
			( 50.29, 41.69) --
			( 49.60, 42.04) --
			( 49.26, 41.52) --
			cycle;
		
		\path[] ( 49.26, 45.58) --
			( 49.60, 47.06) --
			( 50.29, 47.62) --
			( 50.97, 49.76) --
			( 51.66, 51.12) --
			( 52.34, 51.81) --
			( 54.40, 56.90) --
			( 59.87, 56.78) --
			( 70.83, 61.87) --
			( 92.75, 67.50) --
			(136.58, 80.30) --
			(224.24, 92.24);
		
		\path[] (224.24, 69.79) --
			(136.58, 65.91) --
			( 92.75, 61.47) --
			( 70.83, 52.63) --
			( 59.87, 47.38) --
			( 54.40, 42.72) --
			( 52.34, 45.95) --
			( 51.66, 40.94) --
			( 50.97, 42.91) --
			( 50.29, 41.69) --
			( 49.60, 42.04) --
			( 49.26, 41.52);
		\definecolor{fillColor}{RGB}{37,122,164}
		
		\path[fill=fillColor,fill opacity=0.20] ( 49.26, 45.98) --
			( 49.60, 50.49) --
			( 50.29, 54.53) --
			( 50.97, 56.67) --
			( 51.66, 58.21) --
			( 52.34, 59.28) --
			( 54.40, 61.78) --
			( 59.87, 67.14) --
			( 70.83, 73.39) --
			( 92.75, 80.37) --
			(136.58, 91.79) --
			(224.24,106.52) --
			(224.24, 85.11) --
			(136.58, 80.41) --
			( 92.75, 73.49) --
			( 70.83, 67.21) --
			( 59.87, 62.68) --
			( 54.40, 56.55) --
			( 52.34, 53.77) --
			( 51.66, 52.67) --
			( 50.97, 48.41) --
			( 50.29, 44.11) --
			( 49.60, 38.04) --
			( 49.26, 34.30) --
			cycle;
		
		\path[] ( 49.26, 45.98) --
			( 49.60, 50.49) --
			( 50.29, 54.53) --
			( 50.97, 56.67) --
			( 51.66, 58.21) --
			( 52.34, 59.28) --
			( 54.40, 61.78) --
			( 59.87, 67.14) --
			( 70.83, 73.39) --
			( 92.75, 80.37) --
			(136.58, 91.79) --
			(224.24,106.52);
		
		\path[] (224.24, 85.11) --
			(136.58, 80.41) --
			( 92.75, 73.49) --
			( 70.83, 67.21) --
			( 59.87, 62.68) --
			( 54.40, 56.55) --
			( 52.34, 53.77) --
			( 51.66, 52.67) --
			( 50.97, 48.41) --
			( 50.29, 44.11) --
			( 49.60, 38.04) --
			( 49.26, 34.30);
		\end{scope}
		\begin{scope}
		\path[clip] (  0.00,  0.00) rectangle (238.49,115.63);
		\definecolor{drawColor}{RGB}{0,0,0}
		
		\path[draw=drawColor,line width= 0.6pt,line join=round] ( 40.51, 30.69) --
			( 40.51,110.13);
		
		\path[draw=drawColor,line width= 0.6pt,line join=round] ( 41.93,107.67) --
			( 40.51,110.13) --
			( 39.09,107.67);
		\end{scope}
		\begin{scope}
		\path[clip] (  0.00,  0.00) rectangle (238.49,115.63);
		\definecolor{drawColor}{gray}{0.30}
		
		\node[text=drawColor,anchor=base east,inner sep=0pt, outer sep=0pt, scale=  0.88] at ( 35.56, 34.74) {10};
		
		\node[text=drawColor,anchor=base east,inner sep=0pt, outer sep=0pt, scale=  0.88] at ( 35.56, 60.41) {100};
		
		\node[text=drawColor,anchor=base east,inner sep=0pt, outer sep=0pt, scale=  0.88] at ( 35.56, 86.08) {1000};
		\end{scope}
		\begin{scope}
		\path[clip] (  0.00,  0.00) rectangle (238.49,115.63);
		\definecolor{drawColor}{gray}{0.20}
		
		\path[draw=drawColor,line width= 0.6pt,line join=round] ( 37.76, 37.77) --
			( 40.51, 37.77);
		
		\path[draw=drawColor,line width= 0.6pt,line join=round] ( 37.76, 63.44) --
			( 40.51, 63.44);
		
		\path[draw=drawColor,line width= 0.6pt,line join=round] ( 37.76, 89.11) --
			( 40.51, 89.11);
		\end{scope}
		\begin{scope}
		\path[clip] (  0.00,  0.00) rectangle (238.49,115.63);
		\definecolor{drawColor}{RGB}{0,0,0}
		
		\path[draw=drawColor,line width= 0.6pt,line join=round] ( 40.51, 30.69) --
			(232.99, 30.69);
		
		\path[draw=drawColor,line width= 0.6pt,line join=round] (230.53, 29.26) --
			(232.99, 30.69) --
			(230.53, 32.11);
		\end{scope}
		\begin{scope}
		\path[clip] (  0.00,  0.00) rectangle (238.49,115.63);
		\definecolor{drawColor}{gray}{0.20}
		
		\path[draw=drawColor,line width= 0.6pt,line join=round] ( 48.92, 27.94) --
			( 48.92, 30.69);
		
		\path[draw=drawColor,line width= 0.6pt,line join=round] ( 91.72, 27.94) --
			( 91.72, 30.69);
		
		\path[draw=drawColor,line width= 0.6pt,line join=round] (134.52, 27.94) --
			(134.52, 30.69);
		
		\path[draw=drawColor,line width= 0.6pt,line join=round] (177.33, 27.94) --
			(177.33, 30.69);
		
		\path[draw=drawColor,line width= 0.6pt,line join=round] (220.13, 27.94) --
			(220.13, 30.69);
		\end{scope}
		\begin{scope}
		\path[clip] (  0.00,  0.00) rectangle (238.49,115.63);
		\definecolor{drawColor}{gray}{0.30}
		
		\node[text=drawColor,anchor=base,inner sep=0pt, outer sep=0pt, scale=  0.88] at ( 48.92, 19.68) {0};
		
		\node[text=drawColor,anchor=base,inner sep=0pt, outer sep=0pt, scale=  0.88] at ( 91.72, 19.68) {250};
		
		\node[text=drawColor,anchor=base,inner sep=0pt, outer sep=0pt, scale=  0.88] at (134.52, 19.68) {500};
		
		\node[text=drawColor,anchor=base,inner sep=0pt, outer sep=0pt, scale=  0.88] at (177.33, 19.68) {750};
		
		\node[text=drawColor,anchor=base,inner sep=0pt, outer sep=0pt, scale=  0.88] at (220.13, 19.68) {1000};
		\end{scope}
		\begin{scope}
		\path[clip] (  0.00,  0.00) rectangle (238.49,115.63);
		\definecolor{drawColor}{RGB}{0,0,0}
		
		\node[text=drawColor,anchor=base,inner sep=0pt, outer sep=0pt, scale=  1.10] at (136.75,  7.64) {$d$};
		\end{scope}
		\begin{scope}
		\path[clip] (  0.00,  0.00) rectangle (238.49,115.63);
		\definecolor{drawColor}{RGB}{0,0,0}
		
		\node[text=drawColor,rotate= 90.00,anchor=base,inner sep=0pt, outer sep=0pt, scale=  1.10] at ( 13.08, 70.41) {time (ms)};
		\end{scope}
		\begin{scope}
		\path[clip] (  0.00,  0.00) rectangle (238.49,115.63);
		
		\path[] ( 37.04, 71.03) rectangle (109.43,125.40);
		\end{scope}
		\begin{scope}
		\path[clip] (  0.00,  0.00) rectangle (238.49,115.63);
		\definecolor{drawColor}{RGB}{247,192,26}
		
		\path[draw=drawColor,line width= 0.6pt,line join=round] ( 43.98,112.67) -- ( 55.55,112.67);
		\end{scope}
		\begin{scope}
		\path[clip] (  0.00,  0.00) rectangle (238.49,115.63);
		\definecolor{fillColor}{RGB}{247,192,26}
		
		\path[fill=fillColor] ( 49.76,112.67) circle (  1.96);
		\end{scope}
		\begin{scope}
		\path[clip] (  0.00,  0.00) rectangle (238.49,115.63);
		\definecolor{drawColor}{RGB}{78,155,133}
		
		\path[draw=drawColor,line width= 0.6pt,dash pattern=on 2pt off 2pt ,line join=round] ( 43.98, 98.22) -- ( 55.55, 98.22);
		\end{scope}
		\begin{scope}
		\path[clip] (  0.00,  0.00) rectangle (238.49,115.63);
		\definecolor{fillColor}{RGB}{78,155,133}
		
		\path[fill=fillColor] ( 49.76,101.27) --
			( 52.41, 96.69) --
			( 47.12, 96.69) --
			cycle;
		\end{scope}
		\begin{scope}
		\path[clip] (  0.00,  0.00) rectangle (238.49,115.63);
		\definecolor{drawColor}{RGB}{37,122,164}
		
		\path[draw=drawColor,line width= 0.6pt,dash pattern=on 4pt off 2pt ,line join=round] ( 43.98, 83.76) -- ( 55.55, 83.76);
		\end{scope}
		\begin{scope}
		\path[clip] (  0.00,  0.00) rectangle (238.49,115.63);
		\definecolor{fillColor}{RGB}{37,122,164}
		
		\path[fill=fillColor] ( 47.80, 81.80) --
			( 51.73, 81.80) --
			( 51.73, 85.72) --
			( 47.80, 85.72) --
			cycle;
		\end{scope}
		\begin{scope}
		\path[clip] (  0.00,  0.00) rectangle (238.49,115.63);
		\definecolor{drawColor}{RGB}{0,0,0}
		
		\node[text=drawColor,anchor=base west,inner sep=0pt, outer sep=0pt, scale=  0.80] at ( 62.49,109.91) {LVE (ACP)};
		\end{scope}
		\begin{scope}
		\path[clip] (  0.00,  0.00) rectangle (238.49,115.63);
		\definecolor{drawColor}{RGB}{0,0,0}
		
		\node[text=drawColor,anchor=base west,inner sep=0pt, outer sep=0pt, scale=  0.80] at ( 62.49, 95.46) {LVE (CP)};
		\end{scope}
		\begin{scope}
		\path[clip] (  0.00,  0.00) rectangle (238.49,115.63);
		\definecolor{drawColor}{RGB}{0,0,0}
		
		\node[text=drawColor,anchor=base west,inner sep=0pt, outer sep=0pt, scale=  0.80] at ( 62.49, 81.01) {VE};
		\end{scope}
	\end{tikzpicture}
	\caption{Average query times and their standard deviation of \ac{lve} on the output of \ac{cp}, \ac{lve} on the output of \ac{cpr}, and \ac{ve} on the initial (propositional) \ac{fg} for input \acp{fg} where about three percent of the factors have permuted arguments.}
	\label{fig:plot-results-inter-p=3}
\end{figure}
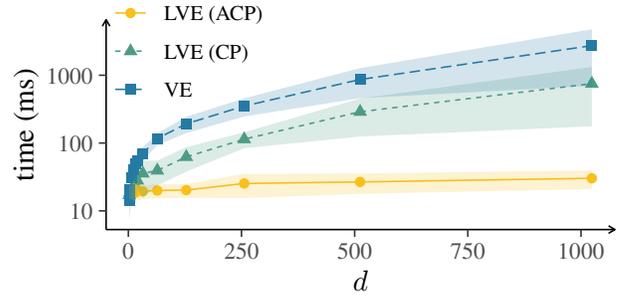

\Cref{fig:plot-results-inter-p=3} contains the results for \acp{fg} where the arguments of about three percent of the factors (randomly chosen) are permuted and there are no commutative factors.
For each domain size $d = 2, 4, 8, 12, 16, 20, 32, 64, 128, 256, 512, 1024$, there are between $5d + 1$ and $d \cdot \lfloor \log_2(d) \rfloor + 2d + 1$ Boolean \acp{rv} and between $2d$ and $d \cdot \lfloor \log_2(d) \rfloor + d + 1$ factors in the input \acp{fg}.
The maximum number of arguments of a factor is seven, i.e., the largest table contains $2^7$ rows and thus, we are able to evaluate larger values for $d$ as well (in the previous scenario, the number of rows increased exponentially with $d$).
Again, we evaluate multiple \acp{fg} for each choice of $d$ by posing two queries per \ac{fg} and then report the average run time and standard deviation over all queries for that choice of $d$.
The results depicted in \cref{fig:plot-results-inter-p=3} show that \ac{cpr} can easily handle large domains, indicating that \ac{cpr} is expected to handle \acp{fg} with tens of thousands of nodes without a hassle.
In particular, \ac{cpr} is able to achieve speedups of up to factor 25 compared to the state of the art \ac{cp}, which, again, faces serious scalability issues.
We give additional results for higher percentages of permuted factors in \cref{appendix:more_eval} and also provide an evaluation investigating when the additional offline overhead of \ac{cpr} amortises there.

\section{Conclusion} \label{sec:conclusion}
We introduce the \ac{cpr} algorithm providing a fully-fledged pipeline from propositional \ac{fg} to a valid \ac{pfg} independent of a specific inference algorithm.
\Ac{cpr} builds on the well-known \ac{cp} algorithm, which does not handle commutative factors (i.e., factors with inherent symmetries resulting in mapping input arguments to unique values regardless of the order of those arguments) and is dependent on the order of the factors' argument lists.
By using \acp{crv} and histograms, \ac{cpr} is able to efficiently encode commutative factors and handle factors representing identical potentials independent of the order of their argument lists.
\Ac{cpr} not only provides significant speedups for online query answering but also solves serious scalability issues of \ac{cp} in practice.

A fundamental problem for future research is to learn a \ac{pfg} directly from a relational database without having to construct the ground model first.
In this regard, \acp{crv} provide a crucial component to keep the size of the \ac{pfg} small.

\section*{Acknowledgements}
This work is supported by the BMBF project AnoMed 16KISA057 and 16KISA050K.
The authors also thank the anonymous reviewers for their valuable feedback.

\bibliography{./references.bib}

\clearpage

\appendix

\begin{strip}
	\centering
	\textbf{\LARGE Appendix}
	\vspace*{1cm}
\end{strip}

\acresetall

\section{Formal Description of the Colour Passing Algorithm} \label{appendix:cp_algo}
The \ac{cp} algorithm (originally named \enquote{CompressFactorGraph})~\citep{Kersting2009a,Ahmadi2013a} solves the problem of constructing a lifted representation for an input \ac{fg}.
The idea of \ac{cp} is to first find symmetries in a propositional \ac{fg} and then group together symmetric subgraphs.
\Ac{cp} looks for symmetries based on potentials of factors, on ranges and evidence of \acp{rv}, as well as on the graph structure by passing around colours.
\Cref{alg:cp} gives a formal description of the \ac{cp} algorithm, which proceeds as follows.
As an initialisation step, each variable node (\ac{rv}) is assigned a colour depending on its range and evidence, meaning that \acp{rv} with identical ranges and identical evidence are assigned the same colour, and each factor is assigned a colour depending on its potentials, i.e., factors with the same potentials get the same colour.
Afterwards, the colour passing procedure begins.
The colours are first passed from every variable node to its neighbouring factor nodes and each factor $\phi$ collects all colours of neighbouring \acp{rv} in the order of their appearance in the argument list of $\phi$.
Based on the collected colours and their own colour, factors are grouped together and reassigned a new colour (to reduce communication overhead).
Then, colours are passed from factor nodes to variable nodes and each message from a factor $\phi$ to a \ac{rv} $R$ includes the position $p(R, \phi)$ of $R$ in $\phi$ in the message.
Again, based on the collected colours and their own colour, \acp{rv} are grouped together and reassigned a new colour.
The colour passing procedure is iterated until groupings do not change anymore and in the end, all \acp{rv} and factors, respectively, are grouped together based on their colour signatures (that is, the messages they received from their neighbours plus their own colour).
\begin{algorithm}[t]
	\caption{Colour Passing}
	\label{alg:cp}
	\alginput{An \ac{fg} $G$ with \acp{rv} $\boldsymbol R = \{R_1, \dots, R_n\}$, \\\hspace*{\algorithmicindent} factors $\boldsymbol \Phi = \{\phi_1, \dots, \phi_m\}$, and evidence \\\hspace*{\algorithmicindent} $\boldsymbol E = \{R_1 = r_1, \ldots, R_k = r_k\}$.} \\
	\algoutput{Groups of \acp{rv} and factors, respectively.}
	\begin{algorithmic}[1]
		\State Assign each $R_i$ a colour according to $\mathcal R(R_i)$ and $\boldsymbol E$\;
		\State Assign each $\phi_i$ a colour according to its potentials\;
		\Repeat
			\For{each factor $\phi \in \boldsymbol \Phi$}
				\State $signature_{\phi} \gets [\,]$\;
				\For{each \ac{rv} $R \in neighbours(G, \phi)$}
					\State\Comment{In order of appearance in $\phi$}\;
					\State $append(signature_{\phi}, R.colour)$\;
				\EndFor
				\State $append(signature_{\phi}, \phi.colour)$\;
			\EndFor
			\State Group together all $\phi$s with the same signature\;
			\State Assign each such cluster a unique colour\;
			\State Set $\phi.colour$ correspondingly for all $\phi$s\;
			\For{each \ac{rv} $R \in \boldsymbol R$}
				\State $signature_{R} \gets [\,]$\;
				\For{each factor $\phi \in neighbours(G, R)$}
					\State $append(signature_{R}, (\phi.colour, p(R, \phi)))$\;
				\EndFor
				\State Sort $signature_{R}$ according to colour\;
				\State $append(signature_{R}, R.colour)$\;
			\EndFor
			\State Group together all $R$s with the same signature\;
			\State Assign each such cluster a unique colour\;
			\State Set $R.colour$ correspondingly for all $R$s\;
		\Until{grouping does not change}
	\end{algorithmic}
\end{algorithm}

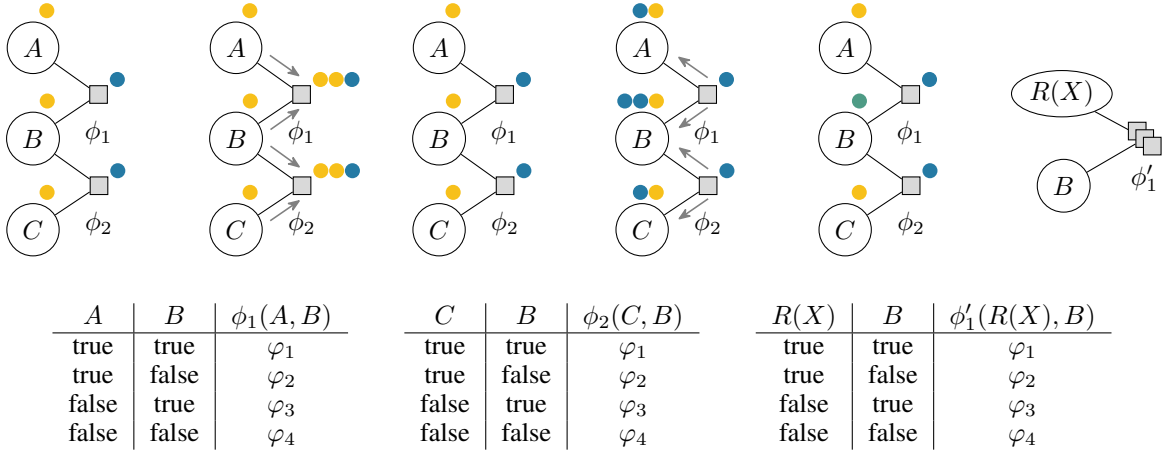
\begin{figure*}[t]
	\centering
	\begin{tikzpicture}[label distance=1mm]
	\node[circle, draw] (A) {$A$};
	\node[circle, draw] (B) [below = 0.5cm of A] {$B$};
	\node[circle, draw] (C) [below = 0.5cm of B] {$C$};
	\factor{below right}{A}{0.25cm and 0.5cm}{270}{$\phi_1$}{f1}
	\factor{below right}{B}{0.25cm and 0.5cm}{270}{$\phi_2$}{f2}

	\nodecolorshift{myyellow}{A}{Acol}{-2.1mm}{1mm}
	\nodecolorshift{myyellow}{B}{Bcol}{-2.1mm}{1mm}
	\nodecolorshift{myyellow}{C}{Ccol}{-2.1mm}{1mm}

	\factorcolor{myblue}{f1}{f1col}
	\factorcolor{myblue}{f2}{f2col}

	\draw (A) -- (f1);
	\draw (B) -- (f1);
	\draw (B) -- (f2);
	\draw (C) -- (f2);

	\node[circle, draw, right = 2cm of A] (A1) {$A$};
	\node[circle, draw, below = 0.5cm of A1] (B1) {$B$};
	\node[circle, draw, below = 0.5cm of B1] (C1) {$C$};
	\factor{below right}{A1}{0.25cm and 0.5cm}{270}{$\phi_1$}{f1_1}
	\factor{below right}{B1}{0.25cm and 0.5cm}{270}{$\phi_2$}{f2_1}

	\nodecolorshift{myyellow}{A1}{A1col}{-2.1mm}{1mm}
	\nodecolorshift{myyellow}{B1}{B1col}{-2.1mm}{1mm}
	\nodecolorshift{myyellow}{C1}{C1col}{-2.1mm}{1mm}

	\factorcolor{myyellow}{f1_1}{f1_1col1}
	\factorcolorshift{myyellow}{f1_1}{f1_1col2}{2.1mm}{0mm}
	\factorcolorshift{myblue}{f1_1}{f1_1col3}{4.2mm}{0mm}
	\factorcolor{myyellow}{f2_1}{f2_1col1}
	\factorcolorshift{myyellow}{f2_1}{f2_1col2}{2.1mm}{0mm}
	\factorcolorshift{myblue}{f2_1}{f2_1col3}{4.2mm}{0mm}

	\coordinate[right=0.1cm of A1, yshift=-0.1cm] (CA1);
	\coordinate[above=0.2cm of f1_1, yshift=-0.1cm] (Cf1_1);
	\coordinate[right=0.1cm of B1, yshift=0.12cm] (CB1);
	\coordinate[right=0.1cm of B1, yshift=-0.1cm] (CB1_1);
	\coordinate[below=0.2cm of f1_1, yshift=0.15cm] (Cf1_1b);
	\coordinate[above=0.2cm of f2_1, yshift=-0.1cm] (Cf2_1);
	\coordinate[right=0.1cm of C1, yshift=0.12cm] (CC1);
	\coordinate[below=0.2cm of f2_1, yshift=0.15cm] (Cf2_1b);

	\begin{pgfonlayer}{bg}
		\draw (A1) -- (f1_1);
		\draw [arc, gray] (CA1) -- (Cf1_1);
		\draw (B1) -- (f1_1);
		\draw [arc, gray] (CB1) -- (Cf1_1b);
		\draw (B1) -- (f2_1);
		\draw [arc, gray] (CB1_1) -- (Cf2_1);
		\draw (C1) -- (f2_1);
		\draw [arc, gray] (CC1) -- (Cf2_1b);
	\end{pgfonlayer}

	\node[circle, draw, right = 2cm of A1] (A2) {$A$};
	\node[circle, draw, below = 0.5cm of A2] (B2) {$B$};
	\node[circle, draw, below = 0.5cm of B2] (C2) {$C$};
	\factor{below right}{A2}{0.25cm and 0.5cm}{270}{$\phi_1$}{f1_2}
	\factor{below right}{B2}{0.25cm and 0.5cm}{270}{$\phi_2$}{f2_2}

	\nodecolorshift{myyellow}{A2}{A2col}{-2.1mm}{1mm}
	\nodecolorshift{myyellow}{B2}{B2col}{-2.1mm}{1mm}
	\nodecolorshift{myyellow}{C2}{C2col}{-2.1mm}{1mm}

	\factorcolor{myblue}{f1_2}{f1_2col1}
	\factorcolor{myblue}{f2_2}{f2_2col1}

	\draw (A2) -- (f1_2);
	\draw (B2) -- (f1_2);
	\draw (B2) -- (f2_2);
	\draw (C2) -- (f2_2);

	\node[circle, draw, right = 2cm of A2] (A3) {$A$};
	\node[circle, draw, below = 0.5cm of A3] (B3) {$B$};
	\node[circle, draw, below = 0.5cm of B3] (C3) {$C$};
	\factor{below right}{A3}{0.25cm and 0.5cm}{270}{$\phi_1$}{f1_3}
	\factor{below right}{B3}{0.25cm and 0.5cm}{270}{$\phi_2$}{f2_3}

	\nodecolorshift{myblue}{A3}{A3col1}{-4.2mm}{1mm}
	\nodecolorshift{myyellow}{A3}{A3col2}{-2.1mm}{1mm}
	\nodecolorshift{myblue}{B3}{B3col1}{-6.3mm}{1mm}
	\nodecolorshift{myblue}{B3}{B3col2}{-4.2mm}{1mm}
	\nodecolorshift{myyellow}{B3}{B3col3}{-2.1mm}{1mm}
	\nodecolorshift{myblue}{C3}{C3col1}{-4.2mm}{1mm}
	\nodecolorshift{myyellow}{C3}{C3col2}{-2.1mm}{1mm}

	\factorcolor{myblue}{f1_3}{f1_3col1}
	\factorcolor{myblue}{f2_3}{f2_3col1}

	\coordinate[right=0.1cm of A3, yshift=-0.1cm] (CA3);
	\coordinate[above=0.2cm of f1_3, yshift=-0.1cm] (Cf1_3);
	\coordinate[right=0.1cm of B3, yshift=0.12cm] (CB3);
	\coordinate[right=0.1cm of B3, yshift=-0.1cm] (CB1_3);
	\coordinate[below=0.2cm of f1_3, yshift=0.15cm] (Cf1_3b);
	\coordinate[above=0.2cm of f2_3, yshift=-0.1cm] (Cf2_3);
	\coordinate[right=0.1cm of C3, yshift=0.12cm] (CC3);
	\coordinate[below=0.2cm of f2_3, yshift=0.15cm] (Cf2_3b);

	\begin{pgfonlayer}{bg}
		\draw (A3) -- (f1_3);
		\draw [arc, gray] (Cf1_3) -- (CA3);
		\draw (B3) -- (f1_3);
		\draw [arc, gray] (Cf1_3b) -- (CB3);
		\draw (B3) -- (f2_3);
		\draw [arc, gray] (Cf2_3) -- (CB1_3);
		\draw (C3) -- (f2_3);
		\draw [arc, gray] (Cf2_3b) -- (CC3);
	\end{pgfonlayer}

	\node[circle, draw, right = 2cm of A3] (A4) {$A$};
	\node[circle, draw, below = 0.5cm of A4] (B4) {$B$};
	\node[circle, draw, below = 0.5cm of B4] (C4) {$C$};
	\factor{below right}{A4}{0.25cm and 0.5cm}{270}{$\phi_1$}{f1_4}
	\factor{below right}{B4}{0.25cm and 0.5cm}{270}{$\phi_2$}{f2_4}

	\nodecolorshift{myyellow}{A4}{A4col}{-2.1mm}{1mm}
	\nodecolorshift{mygreen}{B4}{B4col}{-2.1mm}{1mm}
	\nodecolorshift{myyellow}{C4}{C4col}{-2.1mm}{1mm}

	\factorcolor{myblue}{f1_4}{f1_4col1}
	\factorcolor{myblue}{f2_4}{f2_4col1}

	\draw (A4) -- (f1_4);
	\draw (B4) -- (f1_4);
	\draw (B4) -- (f2_4);
	\draw (C4) -- (f2_4);

	\pfs{right}{B4}{3.5cm}{270}{$\phi'_1$}{f12a}{f12}{f12b}

	\node[ellipse, inner sep = 1.2pt, draw, above left = 0.25cm and 0.5cm of f12] (AC) {$R(X)$};
	\node[circle, draw] (B) [below left = 0.25cm and 0.7cm of f12] {$B$};

	\begin{pgfonlayer}{bg}
		\draw (AC) -- (f12);
		\draw (B) -- (f12);
	\end{pgfonlayer}

	\node[below = 0.5cm of C2, xshift=1.5cm] (tab_f2) {
		\begin{tabular}{c|c|c}
			$C$   & $B$   & $\phi_2(C,B)$ \\ \hline
			true  & true  & $\varphi_1$ \\
			true  & false & $\varphi_2$ \\
			false & true  & $\varphi_3$ \\
			false & false & $\varphi_4$ \\
		\end{tabular}
	};

	\node[left = 0.5cm of tab_f2] (tab_f1) {
		\begin{tabular}{c|c|c}
			$A$   & $B$   & $\phi_1(A,B)$ \\ \hline
			true  & true  & $\varphi_1$ \\
			true  & false & $\varphi_2$ \\
			false & true  & $\varphi_3$ \\
			false & false & $\varphi_4$ \\
		\end{tabular}
	};

	\node[right = 0.5cm of tab_f2] (tab_f12) {
		\begin{tabular}{c|c|c}
			$R(X)$   & $B$   & $\phi'_1(R(X),B)$ \\ \hline
			true  & true  & $\varphi_1$ \\
			true  & false & $\varphi_2$ \\
			false & true  & $\varphi_3$ \\
			false & false & $\varphi_4$ \\
		\end{tabular}
	};
\end{tikzpicture}
	\caption{A visualisation of the steps undertaken by the \ac{cp} algorithm on an input \ac{fg} with only Boolean \acp{rv} and no evidence (left). Colours are first passed from variable nodes to factor nodes, followed by a recolouring, and then passed back from factor nodes to variable nodes, again followed by a recolouring. The procedure is iterated until convergence and the \ac{pfg} corresponding to the resulting groupings of \acp{rv} and factors is depicted on the right. This example builds on Figure 2 from~\protect\citep{Ahmadi2013a}.}
	\label{fig:cp_example}
\end{figure*}

\Cref{fig:cp_example} illustrates the \ac{cp} algorithm on an example \ac{fg}~\citep{Ahmadi2013a}.
In this example, $A$, $B$, and $C$ are Boolean \acp{rv} with no evidence and thus, they all receive the same colour (e.g., $\mathrm{yellow}$).
As the potentials of $\phi_1$ and $\phi_2$ are identical, $\phi_1$ and $\phi_2$ are assigned the same colour as well (e.g., $\mathrm{blue}$).
The colour passing then starts from variable nodes to factor nodes, that is, $A$ and $B$ send their colour ($\mathrm{yellow}$) to $\phi_1$ and $B$ and $C$ send their colour ($\mathrm{yellow}$) to $\phi_2$.
$\phi_1$ and $\phi_2$ are then recoloured according to the colours they received from their neighbours to reduce the communication overhead.
Since $\phi_1$ and $\phi_2$ received identical colours (two times the colour $\mathrm{yellow}$), they are assigned the same colour during recolouring.
Afterwards, the colours are passed from factor nodes to variable nodes and this time not only the colours but also the position of the \acp{rv} in the argument list of the corresponding factor are shared.
Consequently, $\phi_1$ sends a tuple $(\mathrm{blue}, 1)$ to $A$ and a tuple $(\mathrm{blue}, 2)$ to $B$, and $\phi_2$ sends a tuple $(\mathrm{blue}, 2)$ to $B$ and a tuple $(\mathrm{blue}, 1)$ to $C$ (positions are not shown in \cref{fig:cp_example}).
Since $A$ and $C$ are both at position one in the argument list of their respective neighbouring factor, they receive identical messages and are recoloured with the same colour.
$B$ is assigned a different colour during recolouring than $A$ and $C$ because $B$ received different messages than $A$ and $C$.
The groupings do not change in further iterations and hence the algorithm terminates.
The output can be represented by the lifted representation shown on the right in \cref{fig:cp_example} where both $A$ and $C$ as well as $\phi_1$ and $\phi_2$ have been grouped together.

\section{Introducing Logical Variables in Groups of Random Variables} \label{appendix:logvars}
After running the colour passing procedure, \acp{rv} and factors belong to groups depending on their assigned colour.
In its original form, the \ac{cp} algorithm is used to run a lifted belief propagation algorithm and thus there is no set of rules specified on how to construct a \ac{pfg} given the groups after running \ac{cp}~\citep{Kersting2009a,Ahmadi2013a}.
In the following, we describe how the resulting groups after running \ac{cpr} can be used to obtain a \ac{pfg} entailing equivalent semantics as the initial \ac{fg} used as input for \ac{cpr}.

As lifted inference algorithms such as \ac{lve} and the \ac{ljt} algorithm have been shown to be complete\footnote{A lifted inference algorithm $\mathcal A$ is \emph{complete} for a model class $\mathcal M$ if $\mathcal A$ is domain lifted (that is, $\mathcal A$ runs in polynomial time with respect to domain sizes in $\mathcal M$) for each query, evidence, and \ac{pfg} in $\mathcal M$~\citep{VanDenBroeck2011a}.} for \acp{pfg} containing only \acp{pf} with at most two \acp{lv} and for \acp{pfg} containing only \acp{prv} having at most one \ac{lv}~\citep{Taghipour2013b,Braun2020a}, we concentrate on these two model classes, referred to as $\mathcal M^{2lv}$ and $\mathcal M^{prv1}$, respectively.
\begin{definition}
	The model class $\mathcal M^{2lv}$ contains every \ac{pfg} in which each \ac{pf} contains at most two \acp{lv}.
\end{definition}
\begin{definition}
	The model class $\mathcal M^{prv1}$ contains every \ac{pfg} in which each \ac{prv} has at most one \ac{lv}.
\end{definition}
We refer to $\mathcal M^{2lv} \cup \mathcal M^{prv1}$ as the \emph{domain-liftable} fragment.
$\mathcal M^{2lv}$ and $\mathcal M^{prv1}$ are able to model a variety of relations and hence provide sufficient expressiveness for most practical applications.
We now describe how a \ac{pfg} is constructed using the groups obtained by running \ac{cpr} for the domain-liftable fragment $\mathcal M^{2lv} \cup \mathcal M^{prv1}$.

Let $\phi'(R'_1(X_{1,1}, \dots, X_{1,k}), \dots, R'_j(X_{j,1}, \dots, X_{j,k}))$ be the new \ac{pf}, build from $\boldsymbol F = \{\phi_1(R_{1,1}, \dots, R_{1,s}), \allowbreak \dots, \allowbreak \phi_{\ell}(R_{\ell,1}, \dots, R_{\ell,s})\}$ (i.e., $\boldsymbol F$ is a group of $\ell$ factors having the same colour after running \ac{cpr}).
To obtain a correct mapping from $\boldsymbol F$ to $\phi'$, it must hold that $gr(\phi') = \boldsymbol F$ and hence $\abs{gr(\phi')} = \abs{\boldsymbol F} = \ell$.
Recall that $gr(\phi')$ refers to the groundings (i.e., the set of all instances) of $\phi'$.
\begin{example}
	Consider the \ac{pf} $\phi(S(X), T(Y))$ with two \acp{prv} $S(X)$ and $T(Y)$.
	Let $\mathcal D(X) = \{x_1, x_2\}$ and $\mathcal D(Y) = \{y_1, y_2\}$.
	Then, we have $gr(S) = \{S(x_1), S(x_2)\}$, $gr(T) = \{T(y_1), T(y_2)\}$, and $gr(\phi) = \{\phi(S(x_1), T(y_1)), \allowbreak \phi(S(x_1), T(y_2)), \allowbreak \phi(S(x_2), T(y_1)), \allowbreak \phi(S(x_2), T(y_2))\}$.
\end{example}
\begin{example}
	Consider the \ac{pf} $\phi(S(X), T(X))$ with two \acp{prv} $S(X)$ and $T(X)$ sharing a single \ac{lv} $X$.
	Let $\mathcal D(X) = \{x_1, x_2\}$.
	Then, it holds that $gr(S) = \{S(x_1), S(x_2)\}$, $gr(T) = \{T(x_1), T(x_2)\}$ and $gr(\phi) = \{\phi(S(x_1), T(x_1)), \allowbreak \phi(S(x_2), T(x_2))\}$.
\end{example}
Note that it is also possible for a \ac{prv} to have no \acp{lv} at all.
A \ac{prv} $R'$ is parameterless if the group it represents contains only a single \ac{rv}---in this case, it holds that $gr(R') = \{R'\}$ and no \ac{lv} needs to be introduced.

Now, we are ready to prove \cref{th:lv_introduction}, i.e., to prove that \ac{cpr} introduces the \acp{lv} correctly to obtain a valid \ac{pfg} entailing equivalent semantics as the initial \ac{fg} for the domain-liftable fragment $\mathcal M^{2lv} \cup \mathcal M^{prv1}$.
In particular, as \ac{cpr} introduces \acp{lv} as specified in \cref{def:lv_introduction}, we have to prove that applying \cref{def:lv_introduction} for the introduction of \acp{lv} yields a valid \ac{pfg} entailing equivalent semantics as the initial \ac{fg} for the domain-liftable fragment.
\begin{proof}[Proof of \Cref{th:lv_introduction}]
	To match the notation in \cref{def:lv_introduction}, let $\phi'(R'_1(X_{1,1}, \ldots, X_{1,k}), \allowbreak \ldots, \allowbreak R'_j(X_{j,1}, \ldots, X_{j,k}))$ be a new \ac{pf}, build from $\boldsymbol F = \{\phi_1(R_{1,1}, \ldots, R_{1,s}), \allowbreak \ldots, \allowbreak \phi_{\ell}(R_{\ell,1}, \ldots, R_{\ell,s})\}$ and let $\boldsymbol S = \{S'_1, \ldots, S'_z\}$ denote the subset of $\phi'$'s arguments with more than one grounding.
	We prove the correctness of \cref{item:single_lv_shared,item:single_lv_distinct,item:two_lvs} given in \cref{def:lv_introduction}:
	\begin{enumerate}
		\item We first show that it is not possible for two \acp{prv} $S'_a \in \boldsymbol S$ and $S'_b \in \boldsymbol S$, $a \neq b$, to have different \acp{lv}. For the sake of contradiction, assume that there are $S'_a(X) \in \boldsymbol S$ and $S'_b(Y) \in \boldsymbol S$ with $X \neq Y$.
		Then, it holds that $\abs{gr(\phi')} \geq \abs{gr(S'_a)} \cdot \abs{gr(S'_b)}$ because the groundings of $\phi'$ include all combinations of groundings for \acp{prv} with distinct \acp{lv} appearing in $\phi'$.
		A contradiction to our assumption that $\abs{gr(\phi')} = \abs{\boldsymbol F} = \abs{gr(S'_a)} = \abs{gr(S'_b)}$ with $\abs{gr(S'_a)} > 1$ and $\abs{gr(S'_b)} > 1$.

		Now, we show that if there were a \ac{prv} $S'_a \in \boldsymbol S$ with two \acp{lv}, all other \acp{prv} in $\phi'$ must have the same two \acp{lv} and in this case, it is possible to equivalently represent the groundings using a single shared \ac{lv} for all \ac{prv} in $\boldsymbol S$.
		If there is a \ac{prv} $S'_a(X,Y) \in \boldsymbol S$ with two \acp{lv} $X$ and $Y$, $X \neq Y$, there can be no \acp{lv} other than $X$ and $Y$ in $\phi'$ due to the restrictions of the domain-liftable fragment.
		Consequently, as $\abs{gr(S'_a)} = \abs{\mathcal D(X)} \cdot \abs{\mathcal D(Y)}$ holds and we assumed that all $S'_i \in \boldsymbol S$ have the same number of groundings, all $S'_i \in \boldsymbol S$ must have the same \acp{lv} $X$ and $Y$.
		In case all $S'_i \in \boldsymbol S$ have the same \acp{lv} $X$ and $Y$, we can equivalently represent the groundings using a single shared \ac{lv} $Z$ with $\abs{\mathcal D(Z)} = \abs{\mathcal D(X)} \cdot \abs{\mathcal D(Y)}$ for all $S'_i \in \boldsymbol S$.
		\item \label{item:single_lv_distinct_proof} We begin by proving that all \acp{prv} in $\boldsymbol S$ have a single \ac{lv}.
		For the sake of contradiction, assume that there is a \ac{prv} $S'_a(X,Y) \in \boldsymbol S$ with two \acp{lv} $X$ and $Y$ such that $X \neq Y$.
		Then, $X$ and $Y$ are the only \acp{lv} in $\phi'$ due to the restrictions of the domain-liftable fragment.
		It follows that $\abs{\boldsymbol F} = \abs{gr(\phi')} = \abs{\mathcal D(X)} \cdot \abs{\mathcal D(Y)} = \abs{gr(S'_a)}$, which is a contradiction to our assumption that $\forall S'_i \in \boldsymbol S$: $\abs{\boldsymbol F} \neq \abs{gr(S'_i)}$.
		Next, we prove the condition for two \acp{prv} to share the same \ac{lv} in two directions.

		For the first direction, we show that if two \acp{prv} $S'_a \in \boldsymbol S$ and $S'_b \in \boldsymbol S$ share the same \ac{lv} $X$, then $\abs{gr(S'_a)} = \abs{gr(S'_b)}$ holds and a bijection $\tau$ satisfying the specified condition exists.
		If $S'_a$ and $S'_b$ share the same \ac{lv} $X$, $\abs{gr(S'_a)} = \abs{\mathcal D(X)} = \abs{gr(S'_b)}$ holds.
		Further, as $gr(S'_a) = \{S'_a(x) \mid x \in \mathcal D(X)\}$ and $gr(S'_b) = \{S'_b(x) \mid x \in \mathcal D(X)\}$, choosing $\tau$ such that $S'_a(x)$ is mapped to $S'_b(x)$ for each $x \in \mathcal D(X)$ satisfies the given condition.

		For the second direction, we show that if two \acp{prv} $S'_a \in \boldsymbol S$ and $S'_b \in \boldsymbol S$ have different \acp{lv} $X$ and $Y$, then no bijection $\tau$ satisfying the specified condition exists.
		Let $S'_a(X) \in \boldsymbol S$ and $S'_b(Y) \in \boldsymbol S$ with $X \neq Y$.
		As $X \neq Y$, each combination of $S'_a(x)$ and $S'_b(y)$ with $x \in \mathcal D(X)$ and $y \in \mathcal D(Y)$ appears in exactly one factor in $\boldsymbol F$ and thus, for each pair of \acp{rv} $(S'_a(x), S'_b(y)) \in gr(S'_a) \times gr(S'_b)$, it holds that $\mathcal F(S'_a(x)) \symdiff \mathcal F(S'_b(y)) \neq \emptyset$ where $\symdiff$ denotes the symmetric difference\footnote{The symmetric difference of two sets $A$ and $B$ is defined as $A \symdiff B = (A \setminus B) \cup (B \setminus A)$.} of two sets.
		Therefore, it is not possible for a bijection $\tau$ satisfying the specified condition to exist.
		\item We first prove that there are both \acp{prv} with one \ac{lv} as well as \acp{prv} with two \acp{lv} in $\boldsymbol S$.
		By definition, it holds that $\abs{\boldsymbol F} = \prod_{X \in lv(\phi')} \abs{\mathcal D(X)}$.
		Consequently, all \acp{prv} $S'_a \in \boldsymbol S$ with $\abs{\boldsymbol F} = \abs{gr(S'_a)} = \prod_{X \in lv(S'_a)} \abs{\mathcal D(X)}$ contain all \acp{lv} occurring in $\phi'$.
		Hence, each $S'_b \in \boldsymbol S$ with $\abs{\boldsymbol F} \neq \abs{gr(S'_b)}$ cannot contain all \acp{lv} occurring in $\phi'$ and must therefore contain less \acp{lv} than the $S'_a \in \boldsymbol S$.
		Due to the restrictions of the domain-liftable fragment, we know that all $S'_a \in \boldsymbol S$ then have two \acp{lv} and all $S'_b \in \boldsymbol S$ have one \ac{lv}.
		Furthermore, there are exactly two distinct \acp{lv} in $\phi'$ and hence, all $S'_a \in \boldsymbol S$ share the same two \acp{lv} $X_1$ and $X_2$ and all $S'_b \in \boldsymbol S$ have either $X_1$ or $X_2$ as their \ac{lv}.
		\Cref{item:single_lv_distinct_proof} completes the proof for the conditions for two \acp{prv} to share the same \ac{lv}.
	\end{enumerate}
	The correctness for introducing the domain sizes follows by definition as the number of groundings of a \ac{prv} equals the size of the group of \acp{rv} it represents.
\end{proof}
A visualisation of the three cases is given in \cref{fig:example_lv_intro.tex}.
In \cref{fig:example_lv_intro_shared.tex}, it holds that $\abs{\boldsymbol F} = \abs{gr(S'_1)} = \abs{gr(S'_2)} = 2$ and thus $S'_1$ and $S'_2$ share the same \ac{lv} $X$ with $\abs{\mathcal D(X)} = 2$.
\Cref{fig:example_lv_intro_distinct.tex} shows an example of a \ac{pf} containing two \acp{prv} with distinct \acp{lv} $X$ and $Y$ with $\abs{\mathcal D(X)} = 2$ and $\abs{\mathcal D(Y)} = 2$ as $4 = \abs{\boldsymbol F} \neq \abs{gr(S'_1)} = \abs{gr(S'_2)} = 2$.
A similar situation is depicted in \cref{fig:example_lv_intro_double.tex} where a single \ac{prv} has two \acp{lv} $X$ and $Y$ with $\abs{\mathcal D(X)} = 2$ and $\abs{\mathcal D(Y)} = 2$.
Thus, it holds that $4 = \abs{\boldsymbol F} = \abs{gr(S'_2)} \neq \abs{gr(S'_1)} = 2$.

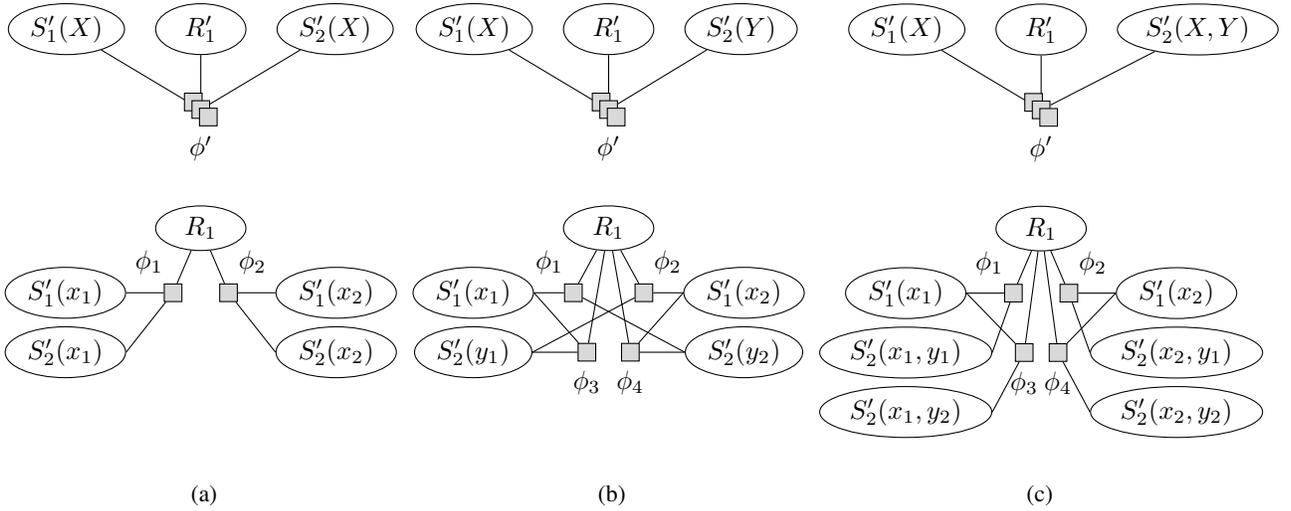
\begin{figure*}[t]
	\centering
	\begin{subfigure}{0.3\linewidth}
		\centering
		\begin{tikzpicture}[rv/.append style = {inner sep = 1.8pt}]
	\node[rv, minimum width = 1.2cm, draw] (r1) {$R'_1$};
	\node[rv, draw, left = 0.4cm of r1] (s1) {$S'_1(X)$};
	\node[rv, draw, right = 0.4cm of r1] (s2) {$S'_2(X)$};
	\pfs{below}{r1}{0.6cm}{270}{$\phi'$}{fa}{f}{fb}

	\node[rv, minimum width = 1.2cm, draw, below = 2cm of r1] (r1gr) {$R_1$};
	\node[rv, draw, below left = 0.4cm and 0.8cm of r1gr] (s1x1) {$S'_1(x_1)$};
	\node[rv, draw, below = 0.1cm of s1x1] (s2x1) {$S'_2(x_1)$};
	\node[rv, draw, below right = 0.4cm and 0.8cm of r1gr] (s1x2) {$S'_1(x_2)$};
	\node[rv, draw, below = 0.1cm of s1x2] (s2x2) {$S'_2(x_2)$};

	\node[ellipse, below = 0.1cm of s2x1, minimum height = 1.075cm] (c1) {};
	\node[ellipse, below = 0.1cm of s2x2, minimum height = 1.075cm] (c2) {};

	\factor{right}{s1x1}{0.5cm}{95}{$\phi_1$}{f1gr}
	\factor{left}{s1x2}{0.5cm}{85}{$\phi_2$}{f2gr}

	\begin{pgfonlayer}{bg}
		\draw (s1) -- (f);
		\draw (s2) -- (f);
		\draw (r1) -- (f);
		\draw (r1gr) -- (f1gr);
		\draw (r1gr) -- (f2gr);
		\draw (s1x1.east) -- (f1gr);
		\draw (s2x1.east) -- (f1gr);
		\draw (s1x2.west) -- (f2gr);
		\draw (s2x2.west) -- (f2gr);
	\end{pgfonlayer}
\end{tikzpicture}
		\caption{}
		\label{fig:example_lv_intro_shared.tex}
	\end{subfigure}
	\begin{subfigure}{0.3\linewidth}
		\centering
		\begin{tikzpicture}[rv/.append style = {inner sep = 1.8pt}]
	\node[rv, minimum width = 1.2cm, draw] (r1) {$R'_1$};
	\node[rv, draw, left = 0.4cm of r1] (s1) {$S'_1(X)$};
	\node[rv, draw, right = 0.4cm of r1] (s2) {$S'_2(Y)$};
	\pfs{below}{r1}{0.6cm}{270}{$\phi'$}{fa}{f}{fb}

	\node[rv, minimum width = 1.2cm, draw, below = 2cm of r1] (r1gr) {$R_1$};
	\node[rv, draw, below left = 0.4cm and 0.8cm of r1gr] (s1x1) {$S'_1(x_1)$};
	\node[rv, draw, below = 0.1cm of s1x1] (s2y1) {$S'_2(y_1)$};
	\node[rv, draw, below right = 0.4cm and 0.8cm of r1gr] (s1x2) {$S'_1(x_2)$};
	\node[rv, draw, below = 0.1cm of s1x2] (s2y2) {$S'_2(y_2)$};

	\node[ellipse, below = 0.1cm of s2y1, minimum height = 1.075cm] (c1) {};
	\node[ellipse, below = 0.1cm of s2y2, minimum height = 1.075cm] (c2) {};

	\factor{right}{s1x1}{0.4cm}{95}{$\phi_1$}{f1gr}
	\factor{left}{s1x2}{0.4cm}{85}{$\phi_2$}{f2gr}
	\factor{right}{s2y1}{0.6cm}{270}{$\phi_3$}{f3gr}
	\factor{left}{s2y2}{0.6cm}{270}{$\phi_4$}{f4gr}

	\begin{pgfonlayer}{bg}
		\draw (s1) -- (f);
		\draw (s2) -- (f);
		\draw (r1) -- (f);
		\draw (r1gr) -- (f1gr);
		\draw (r1gr) -- (f2gr);
		\draw (r1gr) -- (f3gr);
		\draw (r1gr) -- (f4gr);
		\draw (s1x1.east) -- (f1gr);
		\draw (s2y2.west) -- (f1gr);
		\draw (s1x2.west) -- (f2gr);
		\draw (s2y1.east) -- (f2gr);
		\draw (s1x1.east) -- (f3gr);
		\draw (s2y1.east) -- (f3gr);
		\draw (s1x2.west) -- (f4gr);
		\draw (s2y2.west) -- (f4gr);
	\end{pgfonlayer}
\end{tikzpicture}
		\caption{}
		\label{fig:example_lv_intro_distinct.tex}
	\end{subfigure}
	\begin{subfigure}{0.33\linewidth}
		\centering
		\begin{tikzpicture}[rv/.append style = {inner sep = 1.8pt}]
	\node[rv, minimum width = 1.2cm, draw] (r1) {$R'_1$};
	\node[rv, draw, left = 0.4cm of r1] (s1) {$S'_1(X)$};
	\node[rv, draw, right = 0.4cm of r1] (s2) {$S'_2(X,Y)$};
	\pfs{below}{r1}{0.6cm}{270}{$\phi'$}{fa}{f}{fb}

	\node[rv, minimum width = 1.2cm, draw, below = 2cm of r1] (r1gr) {$R_1$};
	
	\node[rv, draw, below left = 0.4cm and 0.8cm of r1gr] (s1x1) {$S'_1(x_1)$};
	\node[rv, draw, below right = 0.4cm and 0.8cm of r1gr] (s1x2) {$S'_1(x_2)$};
	\node[rv, draw, below = 0.1cm of s1x1] (s2x1y1) {$S'_2(x_1,y_1)$};
	\node[rv, draw, below = 0.1cm of s2x1y1] (s2x1y2) {$S'_2(x_1,y_2)$};
	\node[rv, draw, below = 0.1cm of s1x2] (s2x2y1) {$S'_2(x_2,y_1)$};
	\node[rv, draw, below = 0.1cm of s2x2y1] (s2x2y2) {$S'_2(x_2,y_2)$};

	\factor{right}{s1x1}{0.5cm}{95}{$\phi_1$}{f1gr}
	\factor{left}{s1x2}{0.5cm}{85}{$\phi_2$}{f2gr}
	\factor{right}{s2x1y1}{0.3cm}{270}{$\phi_3$}{f3gr}
	\factor{left}{s2x2y1}{0.3cm}{270}{$\phi_4$}{f4gr}

	\begin{pgfonlayer}{bg}
		\draw (s1) -- (f);
		\draw (s2) -- (f);
		\draw (r1) -- (f);
		\draw (r1gr) -- (f1gr);
		\draw (r1gr) -- (f2gr);
		\draw (r1gr) -- (f3gr);
		\draw (r1gr) -- (f4gr);
		\draw (s1x1.east) -- (f1gr);
		\draw (s1x1.east) -- (f3gr);
		\draw (s1x2.west) -- (f2gr);
		\draw (s1x2.west) -- (f4gr);
		\draw (s2x1y1.east) -- (f1gr);
		\draw (s2x1y2.east) -- (f3gr);
		\draw (s2x2y1.west) -- (f2gr);
		\draw (s2x2y2.west) -- (f4gr);
	\end{pgfonlayer}
\end{tikzpicture}
		\caption{}
		\label{fig:example_lv_intro_double.tex}
	\end{subfigure}
	\caption{A visualisation of the three cases given in \cref{def:lv_introduction} based on small examples. The domains of the two \acp{lv} $X$ and $Y$ are given by $\mathcal D(X) = \{x_1, x_2\}$ and $\mathcal D(Y) = \{y_1, y_2\}$, respectively. In (a), there is a \ac{pf} $\phi'$ with a single \ac{lv} $X$ shared across two \acp{prv} $S'_1$ and $S'_2$, (b) shows a \ac{pf} $\phi'$ containing two \acp{prv} $S'_1$ and $S'_2$ with distinct \acp{lv} $X$ and $Y$, and (c) displays a \ac{pf} $\phi'$ with a \ac{prv} $S'_2$ containing two \acp{lv} $X$ and $Y$. The groundings of the \acp{pf} are illustrated below the respective \ac{pf}. In all three cases, the \ac{pf} $\phi'$ contains an additional parameterless \ac{prv} $R'_1$.}
	\label{fig:example_lv_intro.tex}
\end{figure*}

Note that \cref{def:lv_introduction} can also be applied on the resulting groups obtained from running \ac{cp} instead of \ac{cpr} to yield a valid \ac{pfg}.
So far, we have proven that \ac{cpr} introduces \acp{lv} correctly.
Next, we give the conditions for \ac{cpr} to count convert a \ac{prv} and then show that the introduction of \acp{crv} is also handled correctly by \ac{cpr}.

To check whether a \ac{crv} is required in the argument list of $\phi'(R'_1(X_{1,1}, \ldots, X_{1,k}), \allowbreak \ldots, \allowbreak R'_j(X_{j,1}, \ldots, X_{j,k}))$, it is sufficient to compare the number $j$ of arguments of $\phi'$ to the number $s$ of arguments of the $\phi_i \in \boldsymbol F = \{\phi_1(R_{1,1}, \ldots, R_{1,s}), \allowbreak \ldots, \allowbreak \phi_{\ell}(R_{\ell,1}, \ldots, R_{\ell,s})\}$.
Note that all $\phi_i$ have the same colour and thus the same number of arguments.
However, it is possible for $\phi'$ to have less arguments than each $\phi_i$ if there are \acp{rv} in the argument lists of each $\phi_i$ that are in the same group.

If the number of arguments of $\phi'$ is equal to the number of arguments of all $\phi_i$ (i.e., $j = s$), there is no need to introduce a \ac{crv} in $\phi'$ as the potentials of the $\phi_i$ can be copied into $\phi'$ (potentials are identical for all $\phi_i$ as they are in the same group).
However, if $\phi'$ has less arguments than the $\phi_i$ (i.e., $j < s$), a \ac{crv} is required to equivalently represent the potentials of the $\phi_i$ in $\phi'$.
If $j < s$, there are at least two \acp{rv} in the argument lists of each $\phi_i$ that are in the same group and hence these \acp{rv} must be in the set of commutative arguments of the $\phi_i$ (otherwise, they would have received different messages and thus would not be in the same group).
Therefore, if $j < s$, all $\phi_i$ are guaranteed to be commutative with respect to their arguments that are in the same group and thus a \ac{crv} can be used to equivalently represent the potentials of the $\phi_i$ in $\phi'$.

As a final remark regarding the introduction of \acp{crv}, we note that it is also conceivable to recursively check the remaining argument lists of the $\phi_i \in \boldsymbol F$ for commutativity after introducing a \ac{crv}.
However, it is not immediately clear how the preconditions for count conversion~\citep{Taghipour2013c} are ensured then.
As the preconditions get more restrictive for additional \acp{crv} in a single \ac{pf} and the total number of arguments of a factor is usually small in practice, we expect that there are very few scenarios in which the extra complexity of searching for additional \acp{crv} pays off.

\section{Further Experimental Results} \label{appendix:more_eval}
In addition to the experimental results provided in \cref{sec:eval}, we also report further results for input \acp{fg} containing different proportions of commutative factors and factors having permuted arguments.
Again, we evaluate the impact of commutative factors and factors with permuted arguments on query times separately by comparing the average query times of \ac{lve} on the resulting \ac{pfg} after running \ac{cpr}, denoted as \acs{lve} (\acs{cpr}), of \ac{lve} on the resulting \ac{pfg} after running \ac{cp}, denoted as \acs{lve} (\acs{cp}), and of \ac{ve} on the initial \ac{fg}.

\begin{figure}[t]
	\centering
	\input{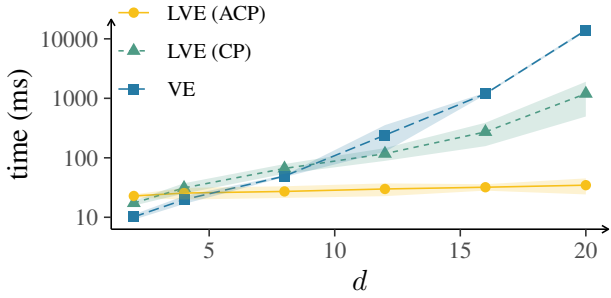}
	\caption{Average query times and their standard deviation of \ac{lve} on the output of \ac{cp}, \ac{lve} on the output of \ac{cpr}, and \ac{ve} on the initial (propositional) \ac{fg} for input \acp{fg} containing three commutative factors.}
	\label{fig:plot-results-intra-k=3}
\end{figure}
\begin{figure}[t]
	\centering
	\input{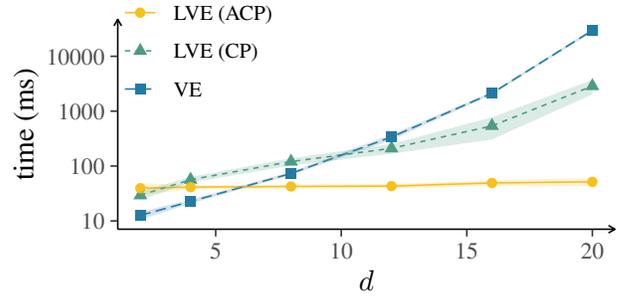}
	\caption{Average query times and their standard deviation of \ac{lve} on the output of \ac{cp}, \ac{lve} on the output of \ac{cpr}, and \ac{ve} on the initial (propositional) \ac{fg} for input \acp{fg} containing seven commutative factors.}
	\label{fig:plot-results-intra-k=7}
\end{figure}

\Cref{fig:plot-results-intra-k=3,fig:plot-results-intra-k=7} depict the results for \acp{fg} containing between eight and 104 factors of which $k=3$ and $k=7$ factors are commutative, respectively.
In total, there are between $2d + (k - 2) \cdot (d + 1) + 2$ and $d \cdot \lfloor \log_2(d) \rfloor + 2d + 2 + (k - 1) \mathbin{/} 2 \cdot (d + 1)$ Boolean \acp{rv} and between $d \cdot \lfloor \log_2(d) \rfloor + d + k + 1$ and $2d + (k - 2) \cdot (d + 1) + 2$ factors for each domain size $d = 2, 4, 8, 12, 16, 20$.
There are no symmetries between factors which cannot be detected by \ac{cp} (i.e., there are no permuted arguments) and the maximum number of arguments of a factor is $d + 1$.
For each choice of $d$, we again measure the run times for two queries on multiple \acp{fg} and report the average query times and their standard deviation.
The results shown in \cref{fig:plot-results-intra-k=3,fig:plot-results-intra-k=7} are similar to the results given in \cref{fig:plot-results-intra-k=1} (y-axes are log-scaled again).
Overall, the run times are generally slightly higher than for \acp{fg} containing only one commutative factor because the \acp{fg} are a bit larger.
The results show that \ac{cpr} handles larger \acp{fg} without any problems while \ac{cp} runs again into scalability issues even for rather small domain sizes.
As expected, \Ac{ve} is again the slowest of all algorithms.

\begin{figure}[t]
	\centering
	\input{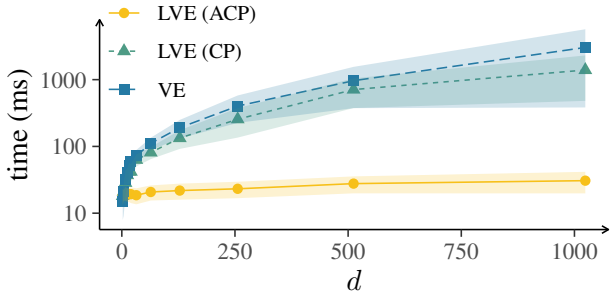}
	\caption{Average query times and their standard deviation of \ac{lve} on the output of \ac{cp}, \ac{lve} on the output of \ac{cpr}, and \ac{ve} on the initial (propositional) \ac{fg} for input \acp{fg} where about five percent of the factors have permuted arguments.}
	\label{fig:plot-results-inter-p=5}
\end{figure}
\begin{figure}[t]
	\centering
	\input{./figures/plot-results-inter-p=10.tex}
	\caption{Average query times and their standard deviation of \ac{lve} on the output of \ac{cp}, \ac{lve} on the output of \ac{cpr}, and \ac{ve} on the initial (propositional) \ac{fg} for input \acp{fg} where about ten percent of the factors have permuted arguments.}
	\label{fig:plot-results-inter-p=10}
\end{figure}
\begin{figure}[t]
	\centering
	\input{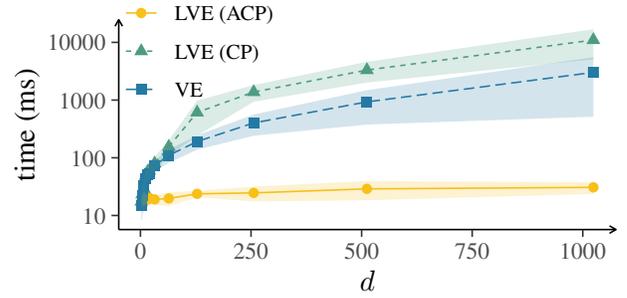}
	\caption{Average query times and their standard deviation of \ac{lve} on the output of \ac{cp}, \ac{lve} on the output of \ac{cpr}, and \ac{ve} on the initial (propositional) \ac{fg} for input \acp{fg} where about 15 percent of the factors have permuted arguments.}
	\label{fig:plot-results-inter-p=15}
\end{figure}

Further experimental results for input \acp{fg} containing symmetries between factors with permuted argument lists can be found in \cref{fig:plot-results-inter-p=5,fig:plot-results-inter-p=10,fig:plot-results-inter-p=15} (y-axes are log-scaled again).
In the input \acp{fg} depicted in \cref{fig:plot-results-inter-p=5,fig:plot-results-inter-p=10,fig:plot-results-inter-p=15}, about five, ten, and 15 percent of the factors (randomly chosen) have permuted argument lists, respectively, and there are no commutative factors.
As in \cref{fig:plot-results-inter-p=3}, there are between $5d + 1$ and $d \cdot \lfloor \log_2(d) \rfloor + 2d + 1$ Boolean \acp{rv} and between $2d$ and $d \cdot \lfloor \log_2(d) \rfloor + d + 1$ factors in the input \acp{fg} for each domain size $d = 2, 4, 8, 12, 16, 20, 32, 64, 128, 256, 512, 1024$.
The maximum number of arguments of a factor is again seven and we evaluate two queries per input \ac{fg} for each domain size $d$ and then report the average run time and standard deviation over all queries for that choice of $d$.
Not surprisingly, the results shown in \cref{fig:plot-results-inter-p=5} exhibit similar patterns as the results in \cref{fig:plot-results-inter-p=3}.
The advantage of running \ac{lve} on the \ac{pfg} obtained by running \ac{cp} compared to running \ac{ve} on the initial propositional \ac{fg} is now less pronounced, as \ac{cp} is not able to detect the symmetries between factors with permuted argument lists.
Therefore, there are more groups in total after running \ac{cp} and hence the resulting \ac{pfg} is less compact.
As \ac{cpr} is able to detect symmetries between factors with permuted argument lists, the percentage of factors with permuted argument lists does not impact the compression obtained by running \ac{cpr} and hence the run times of \ac{lve} on the output of \ac{cpr} are not negatively affected by a higher percentage of permuted factors.
When further increasing the percentage of permuted factors, the run times of \ac{lve} on the output of \ac{cp} drastically increase, as shown in \cref{fig:plot-results-inter-p=10,fig:plot-results-inter-p=15}.
Interestingly, running \ac{ve} on the initial \ac{fg} becomes even faster than running \ac{lve} on the \ac{pfg} obtained by running \ac{cp} if at least ten percent of the factors have permuted argument lists.
The result that \ac{lve} (CP) becomes slower than \ac{ve} can by explained by the fact that \ac{lve} introduces some overhead compared to \ac{ve}.
Even though the overhead induced by \ac{lve} is rather small, \ac{cp} groups just a few \acp{rv} and factors, respectively, and \ac{lve} does not benefit a lot from a vast amount of small groups.
Therefore, the results suggest that the compression obtained by \ac{cp} does not compensate for the overhead of \ac{lve}.
\Cref{fig:plot-results-inter-p=10,fig:plot-results-inter-p=15} emphasise that checking for permutations is indispensible for \ac{cp} to be effective in practical applications.

\begin{figure}[t]
	\centering
\begin{tikzpicture}[x=1pt,y=1pt]
\definecolor{fillColor}{RGB}{255,255,255}
\path[use as bounding box,fill=fillColor,fill opacity=0.00] (0,0) rectangle (238.49,115.63);
\begin{scope}
\path[clip] (  0.00,  0.00) rectangle (238.49,115.63);
\definecolor{drawColor}{RGB}{255,255,255}
\definecolor{fillColor}{RGB}{255,255,255}

\path[draw=drawColor,line width= 0.6pt,line join=round,line cap=round,fill=fillColor] (  0.00,  0.00) rectangle (238.49,115.63);
\end{scope}
\begin{scope}
\path[clip] ( 31.71, 30.69) rectangle (232.99,110.13);
\definecolor{fillColor}{RGB}{255,255,255}

\path[fill=fillColor] ( 31.71, 30.69) rectangle (232.99,110.13);
\definecolor{drawColor}{RGB}{37,122,164}

\path[draw=drawColor,line width= 0.6pt,line join=round] ( 40.86, 34.30) --
	( 86.61, 35.08) --
	(132.35, 35.41) --
	(178.10, 38.16) --
	(223.84, 49.95);
\definecolor{drawColor}{RGB}{78,155,133}

\path[draw=drawColor,line width= 0.6pt,dash pattern=on 2pt off 2pt ,line join=round] ( 40.86, 35.15) --
	( 86.61, 35.28) --
	(132.35, 37.20) --
	(178.10, 50.81) --
	(223.84,102.85);
\definecolor{drawColor}{RGB}{247,192,26}

\path[draw=drawColor,line width= 0.6pt,dash pattern=on 4pt off 2pt ,line join=round] ( 40.86, 35.15) --
	( 86.61, 35.28) --
	(132.35, 37.61) --
	(178.10, 54.13) --
	(223.84,106.52);
\definecolor{fillColor}{RGB}{247,192,26}

\path[fill=fillColor] ( 84.64, 33.31) --
	( 88.57, 33.31) --
	( 88.57, 37.24) --
	( 84.64, 37.24) --
	cycle;

\path[fill=fillColor] (130.39, 35.65) --
	(134.31, 35.65) --
	(134.31, 39.57) --
	(130.39, 39.57) --
	cycle;

\path[fill=fillColor] (221.88,104.56) --
	(225.80,104.56) --
	(225.80,108.48) --
	(221.88,108.48) --
	cycle;

\path[fill=fillColor] ( 38.90, 33.18) --
	( 42.82, 33.18) --
	( 42.82, 37.11) --
	( 38.90, 37.11) --
	cycle;

\path[fill=fillColor] (176.13, 52.17) --
	(180.06, 52.17) --
	(180.06, 56.09) --
	(176.13, 56.09) --
	cycle;
\definecolor{fillColor}{RGB}{78,155,133}

\path[fill=fillColor] ( 86.61, 38.33) --
	( 89.25, 33.76) --
	( 83.96, 33.76) --
	cycle;

\path[fill=fillColor] (132.35, 40.25) --
	(134.99, 35.67) --
	(129.71, 35.67) --
	cycle;

\path[fill=fillColor] (223.84,105.90) --
	(226.48,101.33) --
	(221.20,101.33) --
	cycle;

\path[fill=fillColor] ( 40.86, 38.20) --
	( 43.50, 33.62) --
	( 38.22, 33.62) --
	cycle;

\path[fill=fillColor] (178.10, 53.86) --
	(180.74, 49.28) --
	(175.45, 49.28) --
	cycle;
\definecolor{fillColor}{RGB}{37,122,164}

\path[fill=fillColor] ( 86.61, 35.08) circle (  1.96);

\path[fill=fillColor] (132.35, 35.41) circle (  1.96);

\path[fill=fillColor] (223.84, 49.95) circle (  1.96);

\path[fill=fillColor] ( 40.86, 34.30) circle (  1.96);

\path[fill=fillColor] (178.10, 38.16) circle (  1.96);
\end{scope}
\begin{scope}
\path[clip] (  0.00,  0.00) rectangle (238.49,115.63);
\definecolor{drawColor}{RGB}{0,0,0}

\path[draw=drawColor,line width= 0.6pt,line join=round] ( 31.71, 30.69) --
	( 31.71,110.13);

\path[draw=drawColor,line width= 0.6pt,line join=round] ( 33.13,107.67) --
	( 31.71,110.13) --
	( 30.29,107.67);
\end{scope}
\begin{scope}
\path[clip] (  0.00,  0.00) rectangle (238.49,115.63);
\definecolor{drawColor}{gray}{0.30}

\node[text=drawColor,anchor=base east,inner sep=0pt, outer sep=0pt, scale=  0.88] at ( 26.76, 31.99) {0};

\node[text=drawColor,anchor=base east,inner sep=0pt, outer sep=0pt, scale=  0.88] at ( 26.76, 51.68) {30};

\node[text=drawColor,anchor=base east,inner sep=0pt, outer sep=0pt, scale=  0.88] at ( 26.76, 71.38) {60};

\node[text=drawColor,anchor=base east,inner sep=0pt, outer sep=0pt, scale=  0.88] at ( 26.76, 91.07) {90};
\end{scope}
\begin{scope}
\path[clip] (  0.00,  0.00) rectangle (238.49,115.63);
\definecolor{drawColor}{gray}{0.20}

\path[draw=drawColor,line width= 0.6pt,line join=round] ( 28.96, 35.02) --
	( 31.71, 35.02);

\path[draw=drawColor,line width= 0.6pt,line join=round] ( 28.96, 54.71) --
	( 31.71, 54.71);

\path[draw=drawColor,line width= 0.6pt,line join=round] ( 28.96, 74.41) --
	( 31.71, 74.41);

\path[draw=drawColor,line width= 0.6pt,line join=round] ( 28.96, 94.10) --
	( 31.71, 94.10);
\end{scope}
\begin{scope}
\path[clip] (  0.00,  0.00) rectangle (238.49,115.63);
\definecolor{drawColor}{RGB}{0,0,0}

\path[draw=drawColor,line width= 0.6pt,line join=round] ( 31.71, 30.69) --
	(232.99, 30.69);

\path[draw=drawColor,line width= 0.6pt,line join=round] (230.53, 29.26) --
	(232.99, 30.69) --
	(230.53, 32.11);
\end{scope}
\begin{scope}
\path[clip] (  0.00,  0.00) rectangle (238.49,115.63);
\definecolor{drawColor}{gray}{0.20}

\path[draw=drawColor,line width= 0.6pt,line join=round] ( 52.30, 27.94) --
	( 52.30, 30.69);

\path[draw=drawColor,line width= 0.6pt,line join=round] (109.48, 27.94) --
	(109.48, 30.69);

\path[draw=drawColor,line width= 0.6pt,line join=round] (166.66, 27.94) --
	(166.66, 30.69);

\path[draw=drawColor,line width= 0.6pt,line join=round] (223.84, 27.94) --
	(223.84, 30.69);
\end{scope}
\begin{scope}
\path[clip] (  0.00,  0.00) rectangle (238.49,115.63);
\definecolor{drawColor}{gray}{0.30}

\node[text=drawColor,anchor=base,inner sep=0pt, outer sep=0pt, scale=  0.88] at ( 52.30, 19.68) {5};

\node[text=drawColor,anchor=base,inner sep=0pt, outer sep=0pt, scale=  0.88] at (109.48, 19.68) {10};

\node[text=drawColor,anchor=base,inner sep=0pt, outer sep=0pt, scale=  0.88] at (166.66, 19.68) {15};

\node[text=drawColor,anchor=base,inner sep=0pt, outer sep=0pt, scale=  0.88] at (223.84, 19.68) {20};
\end{scope}
\begin{scope}
\path[clip] (  0.00,  0.00) rectangle (238.49,115.63);
\definecolor{drawColor}{RGB}{0,0,0}

\node[text=drawColor,anchor=base,inner sep=0pt, outer sep=0pt, scale=  1.10] at (132.35,  7.64) {$d$};
\end{scope}
\begin{scope}
\path[clip] (  0.00,  0.00) rectangle (238.49,115.63);
\definecolor{drawColor}{RGB}{0,0,0}

\node[text=drawColor,rotate= 90.00,anchor=base,inner sep=0pt, outer sep=0pt, scale=  1.10] at ( 13.08, 70.41) {$\alpha$};
\end{scope}
\begin{scope}
\path[clip] (  0.00,  0.00) rectangle (238.49,115.63);

\path[] ( 26.82, 63.09) rectangle ( 76.86,117.45);
\end{scope}
\begin{scope}
\path[clip] (  0.00,  0.00) rectangle (238.49,115.63);
\definecolor{drawColor}{RGB}{37,122,164}

\path[draw=drawColor,line width= 0.6pt,line join=round] ( 33.77,104.72) -- ( 45.33,104.72);
\end{scope}
\begin{scope}
\path[clip] (  0.00,  0.00) rectangle (238.49,115.63);
\definecolor{fillColor}{RGB}{37,122,164}

\path[fill=fillColor] ( 39.55,104.72) circle (  1.96);
\end{scope}
\begin{scope}
\path[clip] (  0.00,  0.00) rectangle (238.49,115.63);
\definecolor{drawColor}{RGB}{78,155,133}

\path[draw=drawColor,line width= 0.6pt,dash pattern=on 2pt off 2pt ,line join=round] ( 33.77, 90.27) -- ( 45.33, 90.27);
\end{scope}
\begin{scope}
\path[clip] (  0.00,  0.00) rectangle (238.49,115.63);
\definecolor{fillColor}{RGB}{78,155,133}

\path[fill=fillColor] ( 39.55, 93.32) --
	( 42.19, 88.74) --
	( 36.91, 88.74) --
	cycle;
\end{scope}
\begin{scope}
\path[clip] (  0.00,  0.00) rectangle (238.49,115.63);
\definecolor{drawColor}{RGB}{247,192,26}

\path[draw=drawColor,line width= 0.6pt,dash pattern=on 4pt off 2pt ,line join=round] ( 33.77, 75.82) -- ( 45.33, 75.82);
\end{scope}
\begin{scope}
\path[clip] (  0.00,  0.00) rectangle (238.49,115.63);
\definecolor{fillColor}{RGB}{247,192,26}

\path[fill=fillColor] ( 37.59, 73.85) --
	( 41.51, 73.85) --
	( 41.51, 77.78) --
	( 37.59, 77.78) --
	cycle;
\end{scope}
\begin{scope}
\path[clip] (  0.00,  0.00) rectangle (238.49,115.63);
\definecolor{drawColor}{RGB}{0,0,0}

\node[text=drawColor,anchor=base west,inner sep=0pt, outer sep=0pt, scale=  0.80] at ( 52.28,101.97) {$k=1$};
\end{scope}
\begin{scope}
\path[clip] (  0.00,  0.00) rectangle (238.49,115.63);
\definecolor{drawColor}{RGB}{0,0,0}

\node[text=drawColor,anchor=base west,inner sep=0pt, outer sep=0pt, scale=  0.80] at ( 52.28, 87.52) {$k=3$};
\end{scope}
\begin{scope}
\path[clip] (  0.00,  0.00) rectangle (238.49,115.63);
\definecolor{drawColor}{RGB}{0,0,0}

\node[text=drawColor,anchor=base west,inner sep=0pt, outer sep=0pt, scale=  0.80] at ( 52.28, 73.06) {$k=7$};
\end{scope}
\end{tikzpicture}
	\caption{Average number $\alpha$ of queries after which the additional offline effort amortises for input \acp{fg} containing a total of $k$ commutative factors.}
	\label{fig:plot-offline-intra}
\end{figure}
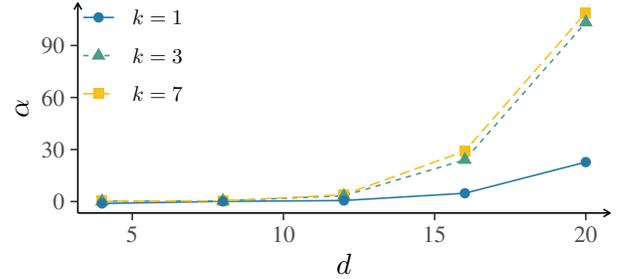
\begin{figure}[t]
	\centering
	\input{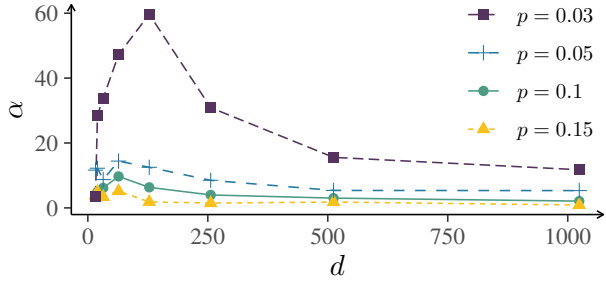}
	\caption{Average number $\alpha$ of queries after which the additional offline effort amortises for input \acp{fg} where a proportion $p$ of the factors has permuted arguments.}
	\label{fig:plot-offline-inter}
\end{figure}

Moreover, we report the average number $\alpha$ of queries after which the additional offline effort of \ac{cpr} compared to \ac{cp} amortises in \cref{fig:plot-offline-intra,fig:plot-offline-inter}.
In particular, it holds that $\alpha = \frac{\Delta_o}{\Delta_g}$ with $\Delta_o$ (\enquote{offline overhead}) denoting the difference of the offline time required by \ac{cpr} and \ac{cp} to obtain the \ac{pfg} and $\Delta_g$ (\enquote{online gain}) denoting the difference of the time required by \ac{lve} on the output of \ac{cp} and \ac{cpr} to answer a query.
Thus, after $\alpha$ queries, the additional time needed during the offline step by \ac{cpr} is saved by the faster query times of \ac{lve} on the output of \ac{cpr}.
\Cref{fig:plot-offline-intra} shows the average $\alpha$ for different domain sizes $d = 4, 8, 12, 16, 20$ on the input \acp{fg} from \cref{fig:plot-results-intra-k=1,fig:plot-results-intra-k=3,fig:plot-results-intra-k=7}.
Each line in \cref{fig:plot-offline-intra} corresponds to a different number $k$ of commutative factors present in the input \ac{fg}.
For domain sizes $d < 16$, far less than ten queries are sufficient to save the additional time needed by \ac{cpr} during the offline step for all choices of $k$.
At $d = 16$ and $d = 20$, $\alpha$ increases slightly for $k = 1$ while it increases more steeply for $k = 3$ and $k = 7$, showing that larger graphs require quite some offline effort (the graph size increases with increasing values for $k$).
However, it is important to keep in mind that the offline step can be performed in advance before a system is deployed and hence, the additional offline overhead is still worth the effort in practical applications, especially if computing resources are limited during online inference.
We also remark that the standard deviation, which is not graphically represented in \cref{fig:plot-offline-intra}, greatly varies between different domain sizes $d$.
In particular, the standard deviation is about $30$ for $d = 20$ while it is mostly far below ten for all other choices of $d$.
\Cref{fig:plot-offline-inter} shows the average $\alpha$ for different domain sizes $d = 16, 20, 32, 64, 128, 256, 512, 1024$ on the input \acp{fg} from \cref{fig:plot-results-inter-p=3,fig:plot-results-inter-p=5,fig:plot-results-inter-p=10,fig:plot-results-inter-p=15}.
Each line corresponds to a different proportion $p$ of factors with permuted arguments.
Note that the graph size does not depend on $p$ and therefore it is not surprising that the average $\alpha$ decreases the larger the proportion $p$ of factors with permuted arguments is (because the offline overhead of \ac{cpr} remains identical while at the same time the online gain increases for larger values of $p$).
Except for some outliers for $p = 0.03$, the average $\alpha$ stays below $20$ for all choices of $d$.
It is also noteworthy that $\alpha$ mostly remains below ten, indicating that the detection of permuted factors often amortises faster than the detection of commutative factors.
We report a rather high standard deviation for most of the displayed scenarios, which supports the intuition that permutations are sometimes found very fast and sometimes require quite some time.

\section{Example Run of Advanced Colour Passing}
\begin{figure*}[t]
	\centering
	\begin{tikzpicture}[label distance=1mm]
	\node[circle, draw] (A) {$A$};
	\node[circle, draw] (B) [below = 0.5cm of A] {$B$};
	\node[circle, draw] (C) [below = 0.5cm of B] {$C$};
	\node[circle, draw] (D) [below = 0.5cm of C] {$D$};
	\factor{below right}{A}{0.25cm and 0.5cm}{270}{$\phi_1$}{f1}
	\factor{below right}{B}{0.25cm and 0.5cm}{270}{$\phi_2$}{f2}
	\factor{below right}{C}{0.25cm and 0.5cm}{270}{$\phi_3$}{f3}

	\nodecolorshift{myyellow}{A}{Acol}{-2.1mm}{1mm}
	\nodecolorshift{myyellow}{B}{Bcol}{-2.1mm}{1mm}
	\nodecolorshift{myyellow}{C}{Ccol}{-2.1mm}{1mm}
	\nodecolorshift{myyellow}{D}{Dcol}{-2.1mm}{1mm}

	\factorcolor{myblue}{f1}{f1col}
	\factorcolor{mygreen}{f2}{f2col}
	\factorcolor{myblue}{f3}{f2col}

	\draw (A) -- (f1);
	\draw (B) -- (f1);
	\draw (B) -- (f2);
	\draw (C) -- (f2);
	\draw (C) -- (f3);
	\draw (D) -- (f3);

	\node[circle, draw, right = 1.75cm of A] (A1) {$A$};
	\node[circle, draw, below = 0.5cm of A1] (B1) {$B$};
	\node[circle, draw, below = 0.5cm of B1] (C1) {$C$};
	\node[circle, draw] (D1) [below = 0.5cm of C1] {$D$};
	\factor{below right}{A1}{0.25cm and 0.5cm}{270}{$\phi_1$}{f1_1}
	\factor{below right}{B1}{0.25cm and 0.5cm}{270}{$\phi_2$}{f2_1}
	\factor{below right}{C1}{0.25cm and 0.5cm}{270}{$\phi_3$}{f3_1}

	\nodecolorshift{myyellow}{A1}{A1col}{-2.1mm}{1mm}
	\nodecolorshift{myyellow}{B1}{B1col}{-2.1mm}{1mm}
	\nodecolorshift{myyellow}{C1}{C1col}{-2.1mm}{1mm}
	\nodecolorshift{myyellow}{D1}{D1col}{-2.1mm}{1mm}

	\factorcolor{myyellow}{f1_1}{f1_1col1}
	\factorcolorshift{myyellow}{f1_1}{f1_1col2}{2.1mm}{0mm}
	\factorcolorshift{myblue}{f1_1}{f1_1col3}{4.2mm}{0mm}
	\factorcolor{myyellow}{f2_1}{f2_1col1}
	\factorcolorshift{myyellow}{f2_1}{f2_1col2}{2.1mm}{0mm}
	\factorcolorshift{mygreen}{f2_1}{f2_1col3}{4.2mm}{0mm}
	\factorcolor{myyellow}{f3_1}{f3_1col1}
	\factorcolorshift{myyellow}{f3_1}{f3_1col2}{2.1mm}{0mm}
	\factorcolorshift{myblue}{f3_1}{f3_1col3}{4.2mm}{0mm}

	\coordinate[right=0.1cm of A1, yshift=-0.1cm] (CA1);
	\coordinate[above=0.2cm of f1_1, yshift=-0.1cm] (Cf1_1);
	\coordinate[right=0.1cm of B1, yshift=0.12cm] (CB1);
	\coordinate[right=0.1cm of B1, yshift=-0.1cm] (CB1_1);
	\coordinate[below=0.2cm of f1_1, yshift=0.15cm] (Cf1_1b);
	\coordinate[above=0.2cm of f2_1, yshift=-0.1cm] (Cf2_1);
	\coordinate[right=0.1cm of C1, yshift=0.12cm] (CC1);
	\coordinate[below=0.2cm of f2_1, yshift=0.15cm] (Cf2_1b);
	\coordinate[right=0.1cm of C1, yshift=-0.1cm] (CC2);
	\coordinate[above=0.2cm of f3_1, yshift=-0.1cm] (Cf3_1);
	\coordinate[right=0.1cm of D1, yshift=0.12cm] (CD1);
	\coordinate[below=0.2cm of f3_1, yshift=0.15cm] (Cf3_1b);

	\begin{pgfonlayer}{bg}
		\draw (A1) -- (f1_1);
		\draw [arc, gray] (CA1) -- (Cf1_1);
		\draw (B1) -- (f1_1);
		\draw [arc, gray] (CB1) -- (Cf1_1b);
		\draw (B1) -- (f2_1);
		\draw [arc, gray] (CB1_1) -- (Cf2_1);
		\draw (C1) -- (f2_1);
		\draw [arc, gray] (CC1) -- (Cf2_1b);
		\draw (C1) -- (f3_1);
		\draw [arc, gray] (CC2) -- (Cf3_1);
		\draw (D1) -- (f3_1);
		\draw [arc, gray] (CD1) -- (Cf3_1b);
	\end{pgfonlayer}

	\node[circle, draw, right = 1.75cm of A1] (A2) {$A$};
	\node[circle, draw, below = 0.5cm of A2] (B2) {$B$};
	\node[circle, draw, below = 0.5cm of B2] (C2) {$C$};
	\node[circle, draw] (D2) [below = 0.5cm of C2] {$D$};
	\factor{below right}{A2}{0.25cm and 0.5cm}{270}{$\phi_1$}{f1_2}
	\factor{below right}{B2}{0.25cm and 0.5cm}{270}{$\phi_2$}{f2_2}
	\factor{below right}{C2}{0.25cm and 0.5cm}{270}{$\phi_3$}{f3_2}

	\nodecolorshift{myyellow}{A2}{A2col}{-2.1mm}{1mm}
	\nodecolorshift{myyellow}{B2}{B2col}{-2.1mm}{1mm}
	\nodecolorshift{myyellow}{C2}{C2col}{-2.1mm}{1mm}
	\nodecolorshift{myyellow}{D2}{D2col}{-2.1mm}{1mm}

	\factorcolor{myblue}{f1_2}{f1_2col1}
	\factorcolor{mygreen}{f2_2}{f2_2col1}
	\factorcolor{myblue}{f3_2}{f3_2col1}

	\draw (A2) -- (f1_2);
	\draw (B2) -- (f1_2);
	\draw (B2) -- (f2_2);
	\draw (C2) -- (f2_2);
	\draw (C2) -- (f3_2);
	\draw (D2) -- (f3_2);

	\node[circle, draw, right = 1.75cm of A2] (A3) {$A$};
	\node[circle, draw, below = 0.5cm of A3] (B3) {$B$};
	\node[circle, draw, below = 0.5cm of B3] (C3) {$C$};
	\node[circle, draw] (D3) [below = 0.5cm of C3] {$D$};
	\factor{below right}{A3}{0.25cm and 0.5cm}{270}{$\phi_1$}{f1_3}
	\factor{below right}{B3}{0.25cm and 0.5cm}{270}{$\phi_2$}{f2_3}
	\factor{below right}{C3}{0.25cm and 0.5cm}{270}{$\phi_3$}{f3_3}

	\nodecolorshift{myblue}{A3}{A3col1}{-4.2mm}{1mm}
	\nodecolorshift{myyellow}{A3}{A3col2}{-2.1mm}{1mm}
	\nodecolorshift{myblue}{B3}{B3col1}{-6.3mm}{1mm}
	\nodecolorshift{mygreen}{B3}{B3col2}{-4.2mm}{1mm}
	\nodecolorshift{myyellow}{B3}{B3col3}{-2.1mm}{1mm}
	\nodecolorshift{myblue}{C3}{C3col1}{-6.3mm}{1mm}
	\nodecolorshift{mygreen}{C3}{C3col2}{-4.2mm}{1mm}
	\nodecolorshift{myyellow}{C3}{C3col3}{-2.1mm}{1mm}
	\nodecolorshift{myblue}{D3}{D3col1}{-4.2mm}{1mm}
	\nodecolorshift{myyellow}{D3}{D3col2}{-2.1mm}{1mm}

	\factorcolor{myblue}{f1_3}{f1_3col1}
	\factorcolor{mygreen}{f2_3}{f2_3col1}
	\factorcolor{myblue}{f3_3}{f3_3col1}

	\coordinate[right=0.1cm of A3, yshift=-0.1cm] (CA3);
	\coordinate[above=0.2cm of f1_3, yshift=-0.1cm] (Cf1_3);
	\coordinate[right=0.1cm of B3, yshift=0.12cm] (CB3);
	\coordinate[right=0.1cm of B3, yshift=-0.1cm] (CB1_3);
	\coordinate[below=0.2cm of f1_3, yshift=0.15cm] (Cf1_3b);
	\coordinate[above=0.2cm of f2_3, yshift=-0.1cm] (Cf2_3);
	\coordinate[right=0.1cm of C3, yshift=0.12cm] (CC3);
	\coordinate[below=0.2cm of f2_3, yshift=0.15cm] (Cf2_3b);
	\coordinate[above=0.2cm of f3_3, yshift=-0.1cm] (Cf3_3);
	\coordinate[right=0.1cm of C3, yshift=-0.1cm] (CC1_3);
	\coordinate[below=0.2cm of f3_3, yshift=0.15cm] (Cf3_3b);
	\coordinate[right=0.1cm of D3, yshift=0.12cm] (CD3);

	\begin{pgfonlayer}{bg}
		\draw (A3) -- (f1_3);
		\draw [arc, gray] (Cf1_3) -- (CA3);
		\draw (B3) -- (f1_3);
		\draw [arc, gray] (Cf1_3b) -- (CB3);
		\draw (B3) -- (f2_3);
		\draw [arc, gray] (Cf2_3) -- (CB1_3);
		\draw (C3) -- (f2_3);
		\draw [arc, gray] (Cf2_3b) -- (CC3);
		\draw (C3) -- (f3_3);
		\draw [arc, gray] (Cf3_3) -- (CC1_3);
		\draw (D3) -- (f3_3);
		\draw [arc, gray] (Cf3_3b) -- (CD3);
	\end{pgfonlayer}

	\node[circle, draw, right = 1.75cm of A3] (A4) {$A$};
	\node[circle, draw, below = 0.5cm of A4] (B4) {$B$};
	\node[circle, draw, below = 0.5cm of B4] (C4) {$C$};
	\node[circle, draw] (D4) [below = 0.5cm of C4] {$D$};
	\factor{below right}{A4}{0.25cm and 0.5cm}{270}{$\phi_1$}{f1_4}
	\factor{below right}{B4}{0.25cm and 0.5cm}{270}{$\phi_2$}{f2_4}
	\factor{below right}{C4}{0.25cm and 0.5cm}{270}{$\phi_3$}{f3_4}

	\nodecolorshift{myyellow}{A4}{A4col}{-2.1mm}{1mm}
	\nodecolorshift{mypurple!75}{B4}{B4col}{-2.1mm}{1mm}
	\nodecolorshift{mypurple!75}{C4}{C4col}{-2.1mm}{1mm}
	\nodecolorshift{myyellow}{D4}{D4col}{-2.1mm}{1mm}

	\factorcolor{myblue}{f1_4}{f1_4col1}
	\factorcolor{mygreen}{f2_4}{f2_4col1}
	\factorcolor{myblue}{f3_4}{f3_4col1}

	\draw (A4) -- (f1_4);
	\draw (B4) -- (f1_4);
	\draw (B4) -- (f2_4);
	\draw (C4) -- (f2_4);
	\draw (C4) -- (f3_4);
	\draw (D4) -- (f3_4);

	\node[ellipse, inner sep = 1.2pt, draw, right = 2.25cm of B4, yshift=1cm] (AD) {$R(X)$};
	\node[ellipse, inner sep = 1.2pt, draw, below = 0.5cm of AD] (BC) {$S(X)$};
	\pfs{right}{AD}{0.5cm}{270}{$\phi'_1$}{f1pa}{f1p}{f1pb}
	\factor{right}{BC}{0.5cm}{270}{$\phi'_2$}{f2p}

	\begin{pgfonlayer}{bg}
		\draw (AD) -- (f1p);
		\draw (BC) -- (f1p);
		\draw (BC) -- (f2p);
	\end{pgfonlayer}

	\node[below = 0.5cm of D2, xshift=-0.5cm] (tab_f2) {
		\begin{tabular}{c|c|c}
			$B$   & $C$   & $\phi_2(B,C)$ \\ \hline
			true  & true  & $\varphi_5$ \\
			true  & false & $\varphi_6$ \\
			false & true  & $\varphi_6$ \\
			false & false & $\varphi_7$ \\
		\end{tabular}
	};

	\node[left = -0.1cm of tab_f2] (tab_f1) {
		\begin{tabular}{c|c|c}
			$A$   & $B$   & $\phi_1(A,B)$ \\ \hline
			true  & true  & $\varphi_1$ \\
			true  & false & $\varphi_2$ \\
			false & true  & $\varphi_3$ \\
			false & false & $\varphi_4$ \\
		\end{tabular}
	};

	\node[right = -0.1cm of tab_f2] (tab_f3) {
		\begin{tabular}{c|c|c}
			$C$   & $D$   & $\phi_3(C,D)$ \\ \hline
			true  & true  & $\varphi_1$ \\
			true  & false & $\varphi_3$ \\
			false & true  & $\varphi_2$ \\
			false & false & $\varphi_4$ \\
		\end{tabular}
	};

	\node[right = -0.1cm of tab_f3] (tab_f12) {
		\begin{tabular}{c|c|c}
			$R(X)$ & $S(X)$ & $\phi'_1(R(X),S(X))$ \\ \hline
			true   & true   & $\varphi_1$ \\
			true   & false  & $\varphi_2$ \\
			false  & true   & $\varphi_3$ \\
			false  & false  & $\varphi_4$ \\
		\end{tabular}
	};

	\node[above = 0.5cm of tab_f12, xshift=0.5cm] (tab_f122) {
		\begin{tabular}{c|c}
			$\#_X[S(X)]$  & $\phi'_2(\#_X[S(X)])$ \\ \hline
			$[2,0]$ & $\varphi_5$ \\
			$[1,1]$ & $\varphi_6$ \\
			$[0,2]$ & $\varphi_7$ \\
		\end{tabular}
	};
\end{tikzpicture}
	\caption{A visualisation of the steps undertaken by \cref{alg:cp_revisited} on an input \ac{fg} with only Boolean \acp{rv} and no evidence (left). The general routine of the \ac{cp} algorithm remains the same, but additionally commutative factors are recognised and potentials are compared independently of the factors' arguments order. The resulting \ac{pfg} is depicted on the right, where two \acp{prv} over a \ac{lv} $X$ with $|\mathcal D(X)| = 2$ replace the two groups of size two ($A$, $D$ and $B$, $C$). Note that the \ac{prv} $S(X)$ appears \enquote{normal} in $\phi'_1$ and count converted in $\phi'_2$.}
	\label{fig:cpr_example}
\end{figure*}
\Cref{fig:cpr_example} illustrates \ac{cpr} on an example input \ac{fg} containing both a commutative factor $\phi_2$ and two factors $\phi_1$ and $\phi_3$ representing identical potentials but the potentials are not written in the same order into their tables.
In this example, all \acp{rv} are Boolean and there is no evidence (i.e., $\boldsymbol E = \emptyset$).
Initially, \ac{cpr} assigns all \acp{rv} the same colour (e.g., $\mathrm{yellow}$) because they have the same range (Boolean) and the same observed event (no event).
$\phi_2$ is assigned its own colour (e.g., $\mathrm{green}$) and $\phi_1$ and $\phi_3$ are assigned the same colour (e.g., $\mathrm{blue}$) because they represent identical potentials when swapping, e.g., the order of $C$ and $D$ in $\phi_3$.
Hence, $\phi_3$ includes position one into its message to $D$ and position two into its message to $C$.
The colours are then passed from variable nodes to factor nodes and as each factor has two neighbouring \acp{rv}, all factors receive the same messages.
Therefore, after recolouring the factors, the colour assignments are identical to the initial assignments.
Afterwards, the factor nodes send their colours to their neighbouring variable nodes.
Each message from a factor to a \ac{rv} contains the position of the \ac{rv} in the factor if the factor is not commutative, else the position is replaced by zero.
During this step, $A$ receives a message $(\mathrm{blue},1)$ from $\phi_1$, $B$ receives a message $(\mathrm{blue},2)$ from $\phi_1$ and a message $(\mathrm{green},0)$ from $\phi_2$, $C$ receives a message $(\mathrm{green},0)$ from $\phi_2$ and a message $(\mathrm{blue},2)$ from $\phi_3$, and $D$ receives a message $(\mathrm{blue},1)$ from $\phi_3$ (remember that the positions of $C$ and $D$ have been swapped at the beginning).
Hence, $A$ and $D$ as well as $B$ and $C$ receive identical messages (positions are not shown in \cref{fig:cpr_example}).
After the recolouring step, $A$ and $D$ share a colour and $B$ and $C$ share a different colour.
The groupings do not change in later iterations and the resulting \ac{pfg} is shown on the right.

Both groups of \acp{rv} are represented by a \ac{prv} ($R(X)$ for $A$ and $D$, and $S(X)$ for $B$ and $C$), respectively, $\phi_1$ and $\phi_3$ are replaced by a \ac{pf} $\phi'_1(R(X), S(X))$, and $\phi_2$ is replaced by a \ac{pf} $\phi'_2(\#_X[S(X)])$ in which $S(X)$ appears count converted.
The \ac{pfg} contains an edge between $\phi'_1$ and $R(X)$ because there exists a \ac{rv} in the group represented by $R(X)$ which is connected to a factor in the group represented by $\phi'_1$ in the original \ac{fg} ($A$ is connected to $\phi_1$ and $D$ is connected to $\phi_3$).
Moreover, there are edges between $\phi'_1$ and $S(X)$ and between $\phi'_2$ and $S(X)$ because there are edges between, for example, $\phi_1$ and $B$ as well as between $\phi_2$ and $B$ in the original \ac{fg}.

Since $\phi'_1$ has the same number of arguments as $\phi_1$, there are no \acp{crv} in $\phi'_1$.
In particular, there are two \acp{prv} with a shared \ac{lv} because $\abs{gr(R(X))} = \abs{gr(S(X))} = \abs{gr(\phi'_1)} = 2$ (all groups contain exactly two elements, that is, $A$ and $D$, $B$ and $C$, as well as $\phi_1$ and $\phi_3$, respectively).
As the number of arguments of $\phi'_2$ has changed compared to the number of arguments of $\phi_2$, a \ac{crv} is used to represent the potentials of $\phi_2$ in $\phi'_2$.
In this example, $\phi_2$ is commutative with respect to both $B$ and $C$ and therefore, $\phi'_2$ has only a single argument counting over all elements of the group $\{B,C\}$.
Note that $S(X)$ appears \enquote{normal} in $\phi'_1$ and count converted in $\phi'_2$.
Running \ac{cp} on the \ac{pfg} given in \cref{fig:cpr_example} ends up without grouping together anything.

\end{document}